\newtheorem{theorem}{Theorem}
\newtheorem{theorem1}{Theorem}
\newtheorem{definition}{Definition}
\newtheorem{proposition}{Proposition}
\newtheorem*{proof}{Proof}
\newtcolorbox{myremark}[1]{
    colback=gray!5,     
    colframe=black!75,  
    fonttitle=\bfseries,
    title=#1,           
    arc=0mm,            
    left=2pt,           
    right=2pt,          
    top=2pt,            
    bottom=2pt,         
    boxrule=0.5pt,      
    enhanced,
    breakable           
}
\begin{document}

\title{FedPF: Accurate Target Privacy Preserving Federated Learning Balancing Fairness and Utility}

\author{
    Kangkang Sun\textsuperscript{12},
    Jun Wu\textsuperscript{12},
    Minyi Guo\textsuperscript{1},
    Jianhua Li\textsuperscript{12*},
    Jianwei Huang\textsuperscript{3*}\\
    \textit{1. The School of Computer Science, Shanghai Jiao Tong University, Shanghai, China} \\
    \textit{2. The Shanghai Key Laboratory of Integrated Administration Technologies for Information Security, Shanghai, China} \\
    \textit{3. The School of Science and Engineering, The Chinese University of Hong Kong, Shenzhen, China} \\
    \textit{\{szpsunkk, junwuhn, lijh888\}@sjtu.edu.cn; guo-my@cs.sjtu.edu.cn; jianweihuang@cuhk.edu.cn} 
    \thanks{This work is partially supported by the National Natural Science Foundation of China (Project 62271434, 62501397), Shenzhen Key Laboratory of Crowd Intelligence Empowered Low-Carbon Energy Networks (No. ZDSYS20220606100601002), Shenzhen Loop Area Institute, and Shenzhen Institute of Artificial Intelligence and Robotics for Society.}
    \thanks{Jianhua Li and Jianwei Huang are the corresponding authors.}
}

\maketitle

\begin{abstract}

Federated Learning (FL) enables collaborative model training without data sharing, yet participants face a fundamental challenge, e.g., simultaneously ensuring fairness across demographic groups while protecting sensitive client data. We introduce a differentially private fair FL algorithm (\textit{FedPF}) that transforms this multi-objective optimization into a zero-sum game where fairness and privacy constraints compete against model utility. Our theoretical analysis reveals an inverse relationship: privacy mechanisms that protect sensitive attributes can reduce the statistical power available for detecting and correcting demographic biases under finite samples in federated settings. We further show that our theoretical bounds are consistent with a non-monotonic fairness-utility relationship, which is empirically validated by experiments where moderate fairness constraints improve generalization before excessive enforcement degrades performance. Compared with mainstream algorithms, even under strict privacy constraints, \textit{FedPF} still maintains the lowest discrimination level among all tested algorithms while retaining high utility. Experimental validation demonstrates up to 42.9\% discrimination reduction across three datasets while maintaining competitive accuracy, but more importantly, reveals that achieving strong privacy and fairness simultaneously requires carefully balanced tradeoffs rather than optimizing either objective in isolation. Furthermore, hardware-level simulations demonstrate that \textit{FedPF} maintains a low computational footprint, making it suitable for resource-constrained edge devices. The source code for our proposed algorithm is publicly accessible at https://github.com/szpsunkk/FedPF.

\end{abstract}

\section{Introduction}
\label{se: Introduction}

Federated Learning (FL) \cite{mcmahan2017communication} has emerged as a transformative paradigm for collaborative machine learning, enabling organizations to train shared models without exposing sensitive local data. From healthcare consortia sharing patient insights to financial institutions detecting fraud patterns, FL promises to unlock the collective intelligence of distributed datasets while preserving data locality. However, as FL systems transition from research prototypes to real-world deployments, practitioners face a fundamental challenge that has received limited attention: simultaneously ensuring fairness across demographic groups while protecting individual privacy through differential privacy mechanisms.


Consider a federated healthcare system where hospitals collaborate to develop diagnostic models. While differential privacy protects patient information by adding carefully calibrated noise to model updates~\cite{dwork2014algorithmic}, this same noise can obscure demographic disparities in treatment outcomes—precisely the patterns that fairness-aware algorithms need to detect and correct~\cite{hardt2016equality}. This creates an inherent tension: stronger privacy protection may inadvertently perpetuate or even amplify existing biases by making discrimination harder to identify and mitigate. Recent high-profile cases of biased AI systems in healthcare and criminal justice  underscore the critical importance of addressing this challenge before FL systems are widely deployed in sensitive domains.

The technical complexity of achieving privacy, fairness, and utility simultaneously in FL stems from several fundamental difficulties. Privacy mechanisms introduce noise that can degrade both model accuracy and the system's ability to detect demographic disparities, creating conflicting optimization objectives \cite{shao2023survey}. Moreover, fairness constraints typically require global knowledge of demographic distributions, yet FL's distributed nature fundamentally limits information sharing between participants. And according to the latest research, fairness constraints can affect model performance. In the case of limited client data, it may lead to overfitting of the model \cite{laakom2025fairness}. The mathematical relationship between privacy parameters, fairness metrics, and model performance in FL remains poorly understood, making it difficult to design principled algorithms with provable guarantees. Rather than presenting an impossibility result, our goal is to provide a principled understanding of the tradeoffs that arise in practical federated learning systems.

These fundamental challenges lead us to investigate three critical research questions that drive our investigation:


\begin{itemize}
\item \textbf{Q1:} \textit{\textbf{Fundamental Compatibility}: To what extent do privacy mechanisms that protect sensitive attributes affect the system's ability to detect and enforce fairness constraints in federated learning? While prior work has studied privacy-utility and fairness-utility tradeoffs separately, their combined impact in federated settings remains insufficiently characterized.}
\item \textbf{Q2:} \textit{\textbf{Principled Algorithm Design}: How can we design principled algorithms that navigate the competing demands of privacy protection, fairness enforcement, and model utility when these objectives exhibit inherent tensions?} 
\item \textbf{Q3:} \textit{\textbf{Interaction Dynamics}: Do fairness constraints always degrade model performance, or can moderate fairness enforcement actually improve generalization by preventing overfitting to majority groups?} 
\end{itemize}

Q1 is an analysis question about the intrinsic compatibility between privacy and fairness, whereas Q2 is an algorithmic question about constructing a solvable optimizer under those competing constraints.


To address these questions, we introduce \textit{FedPF}, a novel differential private fair algorithm that formulates the privacy-fairness-utility optimization as a zero-sum game where privacy and fairness constraints compete against model utility. Unlike existing approaches that treat these objectives independently \cite{corbucci2024puffle,gu2022privacy,sun2023toward}, our game-theoretic formulation explicitly captures the tensions between competing ethical requirements and provides a principled framework for managing tradeoffs. Meanwhile, we consider the \textit{specific sensitive attributes protection} requirements of local clients in FL, while the current research mainly focuses on the privacy protection of the client model.

Our theoretical analysis reveals several surprising insights that challenge conventional wisdom about multi-objective optimization in FL. Our analysis shows that stricter differential privacy reduces the effective signal available for fairness estimation, thereby limiting the system's ability to detect and correct demographic biases under bounded samples. This occurs because the noise added for privacy protection obscures the very statistical patterns that fairness algorithms rely on to identify discrimination. Furthermore, we discover a non-monotonic fairness-utility relationship where moderate fairness constraints initially improve model generalization by reducing overfitting to majority groups, but excessive fairness enforcement eventually degrades performance as the system overcorrects for demographic differences. The contributions of this paper can be summarized as follows:

\begin{itemize}
    \item \textit{\textbf{Game-theoretic optimization framework in FL}}: In this paper, we proposed a private and fair FL algorithm (\textit{FedPF}), which captures the true competitive nature of privacy-fairness-utility relationships, including Learner (\textit{client-side}) and Auditor (\textit{server-side}). The learner minimizes the local model $f_i$ with \textit{Lagrangian} multiplier while the auditor maximize the dual $\lambda_i$ variable of \textit{Lagrangian} multiplier at server. The \textit{FedPF} algorithm addresses the challenge of non-convex, constrained optimization in distributed settings through adaptive game-theoretic balancing. Our analysis techniques provide a general framework applicable to other multi-constraint FL problems, combining differential privacy theory with fairness metrics. 
    \item \textit{\textbf{Theoretical analysis for privacy-fairness-utility in FL}}: We provide a theoretical characterization of how privacy noise interferes with fairness estimation, resulting in quantifiable tradeoffs between privacy budgets, fairness tolerance, and utility under federated settings. Our analysis suggests that fairness constraints can play a regularizing role, and our experiments consistently reveal a non-monotonic fairness-utility behavior in federated learning. This result provides evidence that fairness constraints do not necessarily degrade utility in federated learning.
    \item \textit{\textbf{Experimental validations}:} Experimental results demonstrate up to 42.9\% discrimination reduction while maintaining competitive accuracy based on \textit{Adult}, \textit{Bank}, and \textit{Compas} datasets, confirming our theoretical predictions about privacy-fairness tensions. Rather than presenting an impossibility result, this work aims to provide a principled understanding of the tradeoffs that system designers must navigate when deploying private and fair federated learning in practice.
\end{itemize}

The remainder of this paper is organized as follows: we introduce the related work in Sec. \ref{se: Related Work}. Then we give the detailed introduction for the problem settings and preliminaries in Sec. \ref{se: Problem Setting}. Sec. \ref{Se: Fairness-Utility Tradeoff} covers problem formulation and algorithm design. Finally, we present the evaluation results and conclusion in Sec. \ref{se: Evalution} and Sec. \ref{sec: conclusion}, respectively.

\begin{table*}[ht]
\centering
\caption{Private and Fair Federated Learning}
\label{ta: comparison of fairness and privacy}
\resizebox{0.9\linewidth}{!}{
\begin{tabular}{cccccc} 
\toprule
\multirow{2}{*}{References} & \multirow{2}{*}{\begin{tabular}[c]{@{}c@{}}Privacy \\Metrics\end{tabular}} & \multirow{2}{*}{\begin{tabular}[c]{@{}c@{}}Fairness~\\Metrics\end{tabular}} & \multicolumn{2}{c}{Techniques}         & \multirow{2}{*}{\begin{tabular}[c]{@{}c@{}}Fairness-Privacy-Utility\\Tradeoffs analysis in FL\end{tabular}}  \\ 
\cline{4-5}
                            &                                                                            &                                                                             & Privacy         & Fairness             &                                                                                                              \\ 
\cline{1-5}
\cite{sun2023toward}                      & $(\epsilon, \delta)$-DP                                                    & EO  DemP                                                                    & Gaussian Noise  & Fairness constraints & \ding{55}                                                                                                           \\
\cite{gu2022privacy}                      & /                                                                          & EO  DemP                                                                    & Gaussian Noise  & Fairness constraints & \ding{55}                                                                                                           \\
\cite{corbucci2024puffle}                        & /                                                                          & DemP  DemD                                                                  & DP-SGD          & Fairness constraints & \ding{55}                                                                                                           \\
\textbf{Our Method}         & $(\epsilon, \delta)$-DP                                                    & EO  DemP                                                                    & Noisy attribute & Fairness constraints & $\checkmark$                                                                                                 \\
\bottomrule
\end{tabular}}
\centering
        \begin{tablenotes} 
		\item Notes: EO represents Equalized Odds. DemP represents Demographic Parity. DemD represents Demographic Disparity. 
     \end{tablenotes} 
\end{table*}

\section{Related Work}\label{se: Related Work}
We review three lines of work that intersect with ours: fairness in FL, privacy in FL, and their joint tradeoffs. Table \ref{ta: comparison of fairness and privacy} summarizes the closest prior work.

\subsection{Fairness and Privacy in FL}
Fairness in FL falls into two categories: \textit{client fairness} \cite{li2019fair, martinez2020minimax, yu2020fairness}, which balances utility across participants, and \textit{algorithmic fairness} \cite{hardt2016equality, kusner2017counterfactual}, which mitigates demographic bias. While algorithmic fairness has been extensively studied in centralized settings via debiasing \cite{kairouz2021advances}, it remains underexplored in FL because the server lacks access to local data, making global distribution estimation difficult \cite{mcmahan2017communication}. Existing FL work mostly addresses client fairness through data augmentation \cite{hao2021towards} or distribution balancing \cite{duan2020self}, leaving algorithmic fairness---particularly fairness overfitting \cite{laakom2025fairness}---largely open.

Privacy protection in FL follows two directions. \textit{Cryptographic methods} offer strong guarantees but incur prohibitive overhead \cite{xu2021privacy, he2025pp}, limiting practical use. \textit{Perturbation-based methods}, dominated by differential privacy, are more scalable but introduce a privacy-utility tension as noise on model updates degrades performance \cite{shao2023survey, zhang2025fedlth, li2025clients, he2025fedaa, shen2023share}. Crucially, existing privacy research rarely analyzes how privacy noise affects the system's ability to detect and correct demographic biases, which is a prerequisite for algorithmic fairness.

\subsection{Tradeoffs among Fairness, Privacy, and Utility}
Joint analysis of privacy, fairness, and utility has gained attention in centralized settings, where DP mechanisms can cause inconsistent accuracy drops in both classification \cite{farrand2020neither} and generation \cite{ganev2022robin}, often forcing tradeoffs between objectives \cite{bagdasaryan2019differential, esipova2022disparate, berk2021fairness}. In federated settings, however, such analysis remains limited: existing work \cite{gu2022privacy, sun2023toward} mainly perturbs shared model parameters and often treats privacy and fairness as loosely coupled objectives, rather than analyzing their interaction under sensitive-attribute protection. Our work fills this gap with a game-theoretic framework and rigorous analysis of the privacy-fairness-utility relationship in FL.

\textit{Differences from Jagielski et al. \cite{jagielski2019differentially}.} Our work is complementary but technically distinct: (i) \cite{jagielski2019differentially} targets centralized learning, whereas we address federated settings with client heterogeneity and non-i.i.d. drift (Theorem \ref{th: non-iid tradeoffs}); (ii) we additionally characterize a non-monotonic fairness-utility relationship and its regularization effect; (iii) we protect sensitive attributes via attribute-level DP rather than perturbing model parameters; and (iv) we validate the learner-auditor game on a resource-constrained FL testbed.

\section{Problem Setting and Preliminaries}\label{se: Problem Setting}
To investigate the three critical research questions posed in Sec. \ref{se: Introduction}, we establish a formal framework that enables rigorous analysis of privacy-fairness-utility interactions in FL. Specifically, our framework must support: (1) mathematical analysis of privacy-fairness compatibility (\textbf{Q1}), (2) principled multi-objective algorithm design (\textbf{Q2}), and (3) theoretical characterization of fairness-utility dynamics (\textbf{Q3}). This section introduces our system model (Sec. \ref{se: Federated Learning}), defines fairness metrics that enable game-theoretic optimization (Sec. \ref{se: Fairness Metrices}), and presents privacy mechanisms whose interaction with fairness constraints of our theoretical analysis (Sec. \ref{se: Differential Privacy}).


\subsection{Federated Learning System Model}
\label{se: Federated Learning}

To address \textbf{Q1} about privacy-fairness compatibility, we require a system model that explicitly separates sensitive attributes from other features, enabling us to analyze how privacy protection of demographic information affects fairness detection capabilities.

\textit{Data Distribution and Sensitive Attributes.} Each client $i \in \{1, 2, \ldots, N\}$ holds a local dataset $D_i = \{(x_{ij}, a_{ij}, y_{ij})\}_{j=1}^{m_i}$, where:
\begin{itemize}
    \item $x_{ij} \in \mathcal{X}_i$ represents non-sensitive features (e.g., age, income).
    \item $a_{ij} \in \mathcal{A}_i$ denotes sensitive demographic attributes (e.g., race, gender) requiring privacy protection.
    \item $y_{ij} \in \{0, 1\}$ is the binary prediction target.
\end{itemize}

This explicit separation is crucial for answering \textbf{Q1}, as it allows us to analyze precisely how privacy mechanisms targeting $a_{ij}$ interfere with fairness algorithms that require demographic pattern recognition involving these same attributes.

\textit{Multi-Objective Federated Optimization.} Traditional FL minimizes empirical risk across clients:
\begin{equation}
\boldsymbol{\theta}^* = \arg\min_{\boldsymbol{\theta} \in \boldsymbol{\Theta}} \frac{1}{\mathcal{N}} \sum_{i=1}^{\mathcal{N}} \mathcal{L}(D_i, \boldsymbol{\theta}).
\end{equation}

However, addressing \textbf{Q2} and \textbf{Q3} requires extending this formulation to simultaneously optimize utility, enforce fairness constraints, and maintain privacy protection. The fundamental challenge is that these objectives may be mathematically incompatible, necessitating the game-theoretic framework we develop in Sec. \ref{Se: Fairness-Utility Tradeoff} to navigate their competing demands.

\subsection{Fairness Metrics for Multi-Objective Analysis}
\label{se: Fairness Metrices}

To investigate \textbf{Q3} about fairness-utility dynamics and enable the principled algorithm design required by \textbf{Q2}, we need fairness metrics that are both theoretically analyzable and practically optimizable under privacy constraints.

\textit{Standard Fairness Criteria.} We build upon two fundamental fairness notions that capture different aspects of demographic equity:

\begin{definition}[Demographic Parity (DemP)]
A classifier $f$ satisfies demographic parity with respect to sensitive attribute $A$ if predictions are independent of group membership, i.e. for any $a \in \mathcal{A}$ and $p \in \{0,1\}$, we have
\begin{equation}
P[f(X) = p | A = a] = P[f(X) = p].
\end{equation}
\end{definition}

\begin{definition}[Equalized Odds (EO)]
A classifier $f$ satisfies equalized odds if prediction accuracy is consistent across groups, i.e. for any $a \in \mathcal{A}$, $p$ and $y \in \{0,1\}$, we have
\begin{equation}
P[f(X) = p | A = a, Y = y] = P[f(X) = p | Y = y].
\end{equation}
\end{definition}

\textit{Parameterized Fairness Framework for Theoretical Analysis.} We introduce a unified fairness constraint that allows us to quantify the relationship between fairness enforcement levels and both privacy protection and model utility:

\begin{definition}[$\varepsilon_f$-Fair Classifier]
\label{de: epsilon_f fair classifier}
A classifier $f$ is $\varepsilon_f$-fair with respect to sensitive attribute $A$ if
\begin{equation}
\mathcal{G}_{(y,a)} := \max_{y \in \{0,1\}, a,a' \in \mathcal{A}} |\gamma_{y,a}(f) - \gamma_{y,a'}(f)| \leq \varepsilon_f,
\label{eq: fairness constraint}
\end{equation}
where $\gamma_{y,a}(f) = \mathbb{E}[f(X) | A = a]$ for DemP constraints, and $\gamma_{y,a}(f) = \mathbb{E}[f(X) | A = a, Y = y]$ for EO constraints.
\end{definition}

The parameter $\varepsilon_f$ is central to addressing \textbf{Q3}: by varying $\varepsilon_f$ from strict fairness (small values) to relaxed fairness (large values), we can theoretically characterize the potentially non-monotonic fairness-utility relationship that challenges conventional assumptions about fairness always degrading performance.

\subsection{Differential Privacy and Fairness Detection Interference}\label{se: Differential Privacy}

To answer \textbf{Q1} about privacy-fairness compatibility, we focus on privacy mechanisms that directly interfere with the demographic pattern detection required for fairness enforcement, creating the fundamental mathematical tension our work analyzes.

\textit{Attribute-Level Privacy Model.} Rather than protecting entire model updates, we consider privacy mechanisms that specifically target sensitive attributes, enabling precise analysis of how privacy noise affects fairness detection.

\begin{definition}[$\varepsilon_p$-DP for Sensitive Attributes]
A randomized mechanism $M$ provides $(\varepsilon_p, \delta)$-differential privacy for sensitive attributes if for all outputs $O$ and neighboring datasets $D_S, D_S'$ differing only in sensitive attributes, we have
\begin{equation}
\Pr[M(D_I, D_S) \in O] \leq e^{\varepsilon_p} \Pr[M(D_I, D_S') \in O] + \delta,
\end{equation}
where $D_I$ contains non-sensitive features and $D_S$ contains sensitive attributes.
\end{definition}

\textit{Exponential Mechanism for Controlled Attribute Perturbation.} To enable theoretical analysis of privacy-fairness interactions, we employ the exponential mechanism, which provides quantifiable noise characteristics essential for our mathematical results:

\begin{definition}[Exponential Mechanism]
    \label{de: exponential mechanism}
For sensitive attribute values $a \in \mathcal{A}$, the exponential mechanism outputs:
\begin{equation}
M(s|a) = \begin{cases}
\frac{e^{\varepsilon_p}}{|\mathcal{A}|-1+e^{\varepsilon_p}} & \text{if } s = a \\
\frac{1}{|\mathcal{A}|-1+e^{\varepsilon_p}} & \text{if } s \neq a
\end{cases}
\end{equation}
\end{definition}

This mechanism is crucial for answering \textbf{Q1} because it allows us to mathematically quantify how privacy parameter $\varepsilon_p$ affects the noise level in demographic observations, directly impacting the system's ability to detect fairness violations $\mathcal{G}_{(y,a)} > \varepsilon_f$.

\textit{The Core Privacy-Fairness Detection Problem.} The mathematical heart of \textbf{Q1} lies in this fundamental tension: fairness constraints require detecting statistical differences between groups (i.e., when $|\gamma_{y,a}(f) - \gamma_{y,a'}(f)| > \varepsilon_f$), but privacy mechanisms add noise that makes these differences harder to distinguish from random variation. Specifically, when the exponential mechanism perturbs sensitive attributes with probability $\bar{\pi} = \frac{1}{|\mathcal{A}|-1+e^{\varepsilon_p}}$, the observed demographic statistics become:
\begin{equation}
\tilde{\gamma}_{y,a}(f) = \pi \cdot \gamma_{y,a}(f) + \bar{\pi} \cdot \sum_{a' \neq a} \gamma_{y,a'}(f).
\label{eq: perturbed demographic statistics}
\end{equation}

This creates a direct mathematical conflict: as $\varepsilon_p$ decreases (stronger privacy), $\bar{\pi}$ increases, making true demographic differences $|\gamma_{y,a}(f) - \gamma_{y,a'}(f)|$ increasingly difficult to detect above the noise threshold $\varepsilon_f$.

\textit{Framework Summary for Research Questions.} This framework enables us to formalize and answer our three research questions:
\begin{itemize}
    \item \textbf{Q1:} The privacy-fairness compatibility analysis centers on how (\ref{eq: perturbed demographic statistics}) affects our ability to detect violations in (\ref{eq: fairness constraint}).
    \item \textbf{Q2:} The competing objectives (utility loss, fairness constraint $\varepsilon_f$, privacy budget $\varepsilon_p$) require the game-theoretic optimization framework developed in Sec. \ref{Se: Fairness-Utility Tradeoff}.
    \item \textbf{Q3:} The fairness-utility dynamics can be analyzed by varying $\varepsilon_f$ in (\ref{eq: fairness constraint}) and measuring its impact on generalization performance.
\end{itemize}

Sec. \ref{se: Optimization Problem in FL} leverages this framework to develop our game-theoretic algorithm and provide theoretical guarantees addressing all three research questions.

\section{Problem Formulation and Algorithm Design}\label{Se: Fairness-Utility Tradeoff}


This section develops our game-theoretic framework for privacy-fairness-utility optimization in FL. We first formulate the multi-objective optimization problem using Lagrangian duality (Sec. \ref{se: Optimization Problem in FL}), addressing \textbf{Q1} about privacy-fairness compatibility. We then present the \textit{FedPF} algorithm that balances these competing objectives (Sec. \ref{se: Private and Fair Federated Learning Algorithm}), tackling \textbf{Q2}'s optimization challenge. Finally, we provide theoretical analysis establishing convergence guarantees and tradeoff relationships (Sec. \ref{se: Error Analysis of FedPF} and \ref{se: Convergence and Robustness}), resolving \textbf{Q3} about fairness-utility dynamics.

\subsection{Optimization Problem in FL}\label{se: Optimization Problem in FL}
Fairness in FL requires preventing classifiers from discriminating against specific demographic groups (e.g., based on gender or race) by ensuring consistent prediction behaviors across these groups. In an FL system with \(\mathcal{N} = \{1, 2, \dots, N\}\) clients, each client \(i \in \mathcal{N}\) holds a local dataset used to train its local model, while preserving data privacy throughout the collaborative training process.

\subsubsection{Client-Side Optimization} 
Building on the framework established in Sec. \ref{se: Federated Learning}, each client $i$ holds a local dataset $\{(x_{ij}, a_{ij}, y_{ij})\}_{j=1}^{m_i}$ with the explicit separation of sensitive attributes $a_{ij}$ that enables our privacy-fairness analysis.

Let \(f_i(\cdot; \boldsymbol{\theta}_i): \mathcal{X}_i \to \{0, 1\}\) denote the local classifier of client $i$, parameterized by \(\boldsymbol{\theta}_i \in \boldsymbol{\Theta}_i\). The client aims to train \(f_i\) to minimize empirical prediction loss while satisfying fairness constraints. We formalize this as a constrained optimization problem for client $i$:
\begin{equation}
    \begin{aligned} 
    \mathcal{L}\left( f_i; \boldsymbol{\theta}_i, \varepsilon_{f_i} \right) = & \min_{\boldsymbol{\theta}_i \in \boldsymbol{\Theta}_i} \, \sum_{j=1}^{m_i}{err}(f_i) \\ \text{s.t.} \quad & \mathcal{G}_{i, (ya)} \leq \varepsilon_{f_i},  
\end{aligned}
\label{eq: client_fair_optim} 
\end{equation}
where ${err}(f_i)= \ell\left( f_i(x_{ij}; \boldsymbol{\theta}_i), y_{ij} \right)$ is the empirical loss function (e.g., cross-entropy loss) for client $i$, and \(\varepsilon_{f_i} > 0\) is the fairness tolerance threshold for criterion at client $i$. When $\varepsilon_f$ is \textit{EO} constraint, \(\mathcal{G}_{i, (ya)} = \max_{y \in \{0, 1\}} \max_{a, a' \in \mathcal{A}_i} \left| \gamma_{y, a}(f_i) - \gamma_{y, a'} (f_i) \right|\) quantifies fairness discrimination, and \(\gamma_{y, a}(f_i) = \mathbb{P}\left( f_i(x; \boldsymbol{\theta}_i) = 1 \mid Y = y, A = a \right)\) is the conditional probability of predicting 1 given label $y$ and sensitive attribute $a$. 

To solve the server optimization problem in (\ref{eq: client_fair_optim}), we consider \textit{Lagrangian} relaxation. By introducing dual variables \(\lambda_i \geq 0\) for the fairness constraints, the problem is transformed into a min-max optimization problem. Inspired by fairness reduction frameworks \cite{agarwal2018reductions}, the \textit{Lagrangian} form of the client-side objective becomes:
\begin{equation}
    \begin{aligned} 
        \min_{f_i \in \mathcal{F}_i} \max_{\lambda_i \in \Lambda_i} \mathcal{L}(f_i, \lambda_i) := & \,\sum_{j=1}^{m_i} {err}(f_i) + \lambda_i  \left( \mathcal{G}_{i, (ya)} - \varepsilon_{f_i} \right), \label{eq: client_lagrangian} 
    \end{aligned}
\end{equation}
where \(\mathcal{F}_i\) is the client $i$ hypothesis space, which represents the set of all possible functions or models that client \(i\) uses to train its local model during the FL process. \(\Lambda_i = \{ \lambda_i \mid \lambda_i \geq 0, \|\lambda_i\|_1 \leq B \}\) is the feasible set of dual variables (bounded by \(B > 0\) to ensure convergence), and \(\lambda_i (\cdot)\) denotes the weighted fairness penalty.

\subsubsection{Server-Side Aggregation}
The server coordinates global optimization by aggregating local objectives while maintaining the privacy-fairness balance established at the client level. The server's goal is to align local fairness constraints and minimize the global empirical loss, ensuring consistent performance across demographic groups globally.
\begin{equation}
    \begin{aligned} \min_{f_i \in \mathcal{F}} \max_{\boldsymbol{\lambda}_i \in \Lambda} \, \frac{1}{\mathcal{N}} \sum_{i=1}^{\mathcal{N}} \mathcal{L}(f_i, \lambda_i), \label{eq: optimization_problem} 
    \end{aligned}
\end{equation}
where \(f_i \in \mathcal{F}\) represents the global classifier (aggregated from local models \(\{f_i\}_{i=1}^{\mathcal{N}}\)), \(\boldsymbol{\lambda} = (\lambda_1, \lambda_2, \dots, \lambda_N) \in \Lambda = \prod_{i=1}^N \Lambda_i\) is the vector of global dual variables, the objective aggregates client-side \textit{Lagrangian} functions (\ref{eq: client_lagrangian}) to enforce global fairness.

For the optimization problems (\ref{eq: optimization_problem}), strong duality is established under the convex surrogate used by reduction-based fair classification \cite{agarwal2018reductions}. Specifically, we optimize a convex surrogate of the fairness constraint and then solve the resulting saddle-point problem.
\begin{equation}
    \begin{aligned} \min_{f_i \in \mathcal{F}} \max_{\boldsymbol{\lambda}_i \in \Lambda} \frac{1}{\mathcal{N}} \sum_{i=1}^{\mathcal{N}} \mathcal{L}(f_i, \lambda_i) = \max_{\boldsymbol{\lambda}_i \in \Lambda} \min_{f_i \in \mathcal{F}} 
    \frac{1}{\mathcal{N}} \sum_{i=1}^{\mathcal{N}} \mathcal{L}(f_i, \lambda_i). \label{eq: strong_duality} \end{aligned}
\end{equation}
\begin{proposition}[Convexity and Saddle-Point Existence]
Under linear (or convex-score) hypotheses with convex loss $\ell$, each fairness moment used in the reduction is affine in the randomized predictor, and $|\cdot|$ constraints are represented by two linear inequalities over moments. Therefore, the feasible set is convex in the primal variable and compact in the dual variable $\Lambda_i=\{\lambda_i\ge 0:\|\lambda_i\|_1\le B\}$. By standard minimax arguments (Sion's theorem), a saddle point exists and Eq. (\ref{eq: strong_duality}) holds for the surrogate problem \cite{agarwal2018reductions}.
\end{proposition}

This duality enables a two-stage solution via a zero-sum game between two players: 
\begin{itemize}
    \item \textbf{\textit{Learner (client-side)}}: Minimizes the aggregated \textit{Lagrangian} by updating local models \(\{f_i\}\) to reduce empirical loss while respecting local fairness constraints.
    \item \textbf{\textit{Auditor (server-side)}}: Maximizes the aggregated \textit{Lagrangian} by adjusting dual variables \(\{\lambda_i\}\) to identify and penalize fairness violations, ensuring global fairness alignment.
\end{itemize}

By iterating between local updates (client-side) and global aggregation (server-side), the framework converges to an optimal solution \((f^*, \boldsymbol{\lambda}^*)\), where \(f^*\) is a globally fair classifier and \(\boldsymbol{\lambda}^*\) are optimal dual variables balancing loss and fairness.

\subsection{Private and Fair FL Algorithm}\label{se: Private and Fair Federated Learning Algorithm}
To solve the min-max problem in (\ref{eq: strong_duality}), we design a private and fair FL algorithm \ref{al: fedpf}. The core idea is to leverage the reduction method from Agarwal et al. \cite{agarwal2018reductions}, where the learner’s best response to a given $\lambda$ (denoted $\text{BEST\_F}(\lambda)$ in Algorithm \ref{al: BEST}) reduces to a cost-sensitive classification problem. This algorithm integrates DP mechanisms to protect sensitive data while enforcing fairness constraints during local training and global aggregation.

\subsubsection{Algorithm Design Rationale}
\textit{FedPF} addresses three key challenges in FL: ensuring fairness across clients with diverse sensitive attributes while minimizing utility loss, protecting sensitive attributes from privacy leaks during model uploads via differential privacy, and balancing utility loss from privacy and fairness constraints through adaptive parameter updates. The algorithm operates in a multi-round framework where clients and the server iteratively communicate to refine the global model. Each client performs local training with fairness constraints and privacy protection, then uploads updated parameters to the server for aggregation. The server aggregates local parameters to form a global model, which is then broadcast back to clients for the next round.

\subsubsection{Core Formulations of $\text{BEST\_F}(\lambda)$} Next we rewrite  the \textit{Lagrangian} function in (\ref{eq: optimization_problem}) for cost-sensitive learning in Algorithm \ref{al: BEST}.

According to the fair learning framework of Agarwal \textit{et al.} \cite{agarwal2018reductions}, when the dual variable $\lambda$ is fixed, the learner's optimal response can be solved through cost-sensitive classification. In this paper, we take binary classification as an example and translate (\ref{eq: optimization_problem}) to a cost-sensitive problem on $\left\{\left(x_j, c_j^0, c_j^1\right)\right\}_{j=1}^m$ with costs (\textit{EO} constraint) and obtain:
\begin{equation}
    f^*=\underset{f \in \mathcal{F}}{\arg \min } \sum_{j=1}^m \{f\left(x_j\right) c_j^1+\left(1-f\left(x_j\right)\right) c_j^0 \},
\end{equation}
\begin{equation}
        \begin{aligned}
        c_j^0 &\leftarrow \mathbf{1}\{y_j \ne 0\}, 
        c_j^1 &\leftarrow \mathbf{1}\{y_j \ne 1\} + \frac{\lambda_{(\tilde{a}_j, y_j)} - \mu_{y_j}}{p_{\tilde{a}_j, y_j}},
    \end{aligned}
\end{equation}
where $c_j^0$ and $c_j^1$ are the cost terms for misclassifying labels 0 and 1, respectively. The costs are adjusted based on the dual parameters $\lambda$ for fairness constraints, where $\mu_{y_i}$ represents the mean values of the dual variables, and $p_{\tilde{a}_j,y_j}$ represents the empirical probabilities of group $(\tilde{a}_j, y_j)$ for \textit{EO} constraint. This form comes from rewriting the fairness term in the Lagrangian into per-sample costs: $(\lambda_{(\tilde{a}_j,y_j)}-\mu_{y_j})$ captures each group-specific penalty relative to the label-wise average dual weight, while division by $p_{\tilde{a}_j,y_j}$ yields an unbiased empirical scaling under the reduction framework \cite{agarwal2018reductions}. If the constraint is \textit{DemP} constraint, the cost terms are set as follows:
\begin{equation}
    c_j^0 = \mathbf{1}\{y_j \ne 0\}, \quad c_j^1 = \mathbf{1}\{y_j \ne 1\} + \frac{\lambda_{\tilde{a}_j} - \mu_{y_j}}{p_{\tilde{a}_j}}.
\end{equation}

\begin{algorithm}[t]
    \small
\caption{\textsc{FedPF: Private and Fair Federated Learning}}
\label{al: fedpf}
\begin{algorithmic}[1]
\REQUIRE $\mathcal{D}_i = \{(x_j, a_j, y_j)\}_{j=1}^{m_i}$,  $\eta_\theta$, $\eta_\lambda$, $\varepsilon_p$,$\varepsilon_f$, $B$, $T$
\ENSURE Final global model parameters $\theta^G$
\STATE Server initialization: Global model $\theta_0^G$, global dual variables $\boldsymbol{\lambda}_0 = (\lambda_{1,0}, ..., \lambda_{N,0}) \leftarrow \mathbf{0}$ \\
\FOR{$t = 1$ to $T$}
    \STATE Server broadcast: Send $\theta_{t-1}^G$ and $\boldsymbol{\lambda}_{t-1}$ to all clients $i$
    \FOR{each client $i \in \mathcal{N}$ \textbf{in parallel}}
        \STATE Local model initialization: $\theta_{i,0} \leftarrow \theta_{t-1}^G$
        \FOR{mini-batch $b \subset \mathcal{D}_i$}
            \STATE \textit{\color{olive!100} //Privacy protection: perturb sensitive attributes}
            \STATE Replace $a_j$ with $\tilde{a}_j$ using randomized response with budget $\varepsilon_p$, obtaining $\tilde{b} = \{(x_j, \tilde{a}_j, y_j)\}$
            \STATE \textit{\color{olive!100}// Cost-sensitive classification under fixed global dual variables}
            \STATE \textbf{\color{magenta!80}{Learner}} update: $f_i^{(t)} \leftarrow \textbf{\textsc{Best\_f}}(\lambda_i, \tilde{b})$ 
            \STATE Compute subgradient: $\partial_{\theta_i} \mathcal{L}(f_i; \theta_i, \lambda_i) = \nabla \sum err(f_i) + \lambda_i \cdot \partial \mathcal{G}_{i,(ya)}$
            \STATE Update local model: $\theta_{i,t+1} \leftarrow \theta_{i,t} - \eta_\theta \cdot \partial_{\theta_i} \mathcal{L}(f_i; \theta_i, \lambda_i)$
        \ENDFOR
        \STATE Upload local model: Send $\theta_{i,t+1}$ and $\lambda_i$ to the server
    \ENDFOR
    \STATE Server updates: $\theta_t^G \leftarrow \frac{1}{\mathcal{N}} \sum_{i=1}^{\mathcal{N}} \theta_{i,t+1}$
        \STATE \textbf{\color{magenta!80}Auditor} update: $\boldsymbol{\lambda}_t \leftarrow \boldsymbol{\lambda}_{t-1} + \eta_\lambda \cdot \nabla_{\boldsymbol{\lambda}} \sum \mathcal{L}$
\ENDFOR
\end{algorithmic}
\end{algorithm}
In implementation, $|\gamma_{y,a}-\gamma_{y,a'}|$ is handled via subgradients (equivalently, by splitting it into two linear constraints $\gamma_{y,a}-\gamma_{y,a'}\le \varepsilon_f$ and $\gamma_{y,a'}-\gamma_{y,a}\le \varepsilon_f$), which avoids nondifferentiability at zero and is consistent with reduction-based fair optimization \cite{agarwal2018reductions}.

\begin{algorithm}[ht]
    \small
\caption{\textsc{Best\_f: Cost-Sensitive Classifier}}
\label{al: BEST}
\begin{algorithmic}[1]
\REQUIRE Global dual variable component $\lambda_i$, privacy-processed mini-batch $\tilde{b} = \{(x_j, \tilde{a}_j, y_j)\}$
\ENSURE Optimal classifier $f^*$
\FOR{$j=1$ to $|\tilde{b}|$}
    \STATE Compute empirical probabilities: $p_{\tilde{a}_j, y_j} = \frac{|\{k: \tilde{a}_k = \tilde{a}_j, y_k = y_j\}|}{|\tilde{b}|}$
    \STATE Retrieve dual means: $\mu_{y_j}$
    \STATE Revised cost terms:
    \[\small
    \begin{aligned}
        c_j^0 &\leftarrow \mathbf{1}\{y_j \ne 0\}  \\
        c_j^1 &\leftarrow \mathbf{1}\{y_j \ne 1\} +  \frac{\lambda_i(\tilde{a}_j, y_j) - \mu_{y_j}}{p_{\tilde{a}_j, y_j}} (\textit{EO constraint}) \\
    \end{aligned}
    \]
\ENDFOR
\STATE Solve: {\small$f^* = \arg\min_{f \in \mathcal{F}} \sum_{j=1}^{|\tilde{b}|} \left[f(x_j) \cdot c_j^1 + (1 - f(x_j)) \cdot c_j^0\right]$}\\
\RETURN $f^*$
\end{algorithmic}
\end{algorithm}

\subsection{Error Analysis of \textit{FedPF} Algorithm} \label{se: Error Analysis of FedPF}

In this section, we analyze the relationship between fairness, privacy, and utility of the proposed \textit{FedPF} algorithm. We first formalize key assumptions and definitions, then present and interpret the unified theorem characterizing the algorithm's performance. To establish theoretical guarantees, we adopt the following assumptions, consistent with the foundational frameworks \cite{agarwal2018reductions,jagielski2019differentially,mozannar2020fair}:

\begin{itemize}
    \item For the true demographic disparity $\Delta = |\gamma_{y,a}(f) - \gamma_{y,a'}(f)|$ and the fairness threshold $\varepsilon_f$, if the privacy parameter $\varepsilon_p$ satisfies:$\varepsilon_p < \ln\left( \frac{|A| - 1}{\frac{\Delta}{\varepsilon_f} - 1} \right)$, then the system cannot reliably detect fairness violations, i.e., privacy protection will cover up discrimination.
    \item \textit{Model Complexity}: The Rademacher complexity of the classifier family \(\mathcal{F}\) is bounded, i.e., \(\mathfrak{R}_m(\mathcal{F}) \leq C m^{-\alpha}\), where \(C\) is a constant and \(\alpha \leq 1/2\) \cite{agarwal2018reductions}.
    \item \textit{Dual Variable Constraints}: The dual variable \(\lambda\) (used to enforce fairness constraints) has a bounded \(\ell_1\)-norm: \(\|\lambda\|_1 \leq B\).
\end{itemize}

We aim to bound the error gap between $\hat{Y}$ and $Y^*$, where $Y^*$ is the best classifier that satisfies the fairness constraints $\varepsilon_f$ and privacy budget $\varepsilon_p$. The following theorem provides the error bound of the \textit{FedPF} algorithm. For the \textit{iid} scenarios, we have the following result:

\begin{theorem}[Privacy-Fairness-Utility Tradeoff of \textit{FedPF}]
    Let \(\hat{Y}\) be the output of the \textit{FedPF} algorithm, and \(Y^*\) be the optimal classifier satisfying \((\varepsilon_p, \delta)\)-differential privacy (DP) and \(\varepsilon_f\)-fairness constraints (EO, DemP). If client data are independently and identically distributed and the Rademacher complexity of the classifier family is bounded, with probability at least \(1-\beta\), the following inequality holds:
    \begin{equation}
        \footnotesize
        \begin{aligned}
            \mathrm{err}(\hat{Y}) \leq \mathrm{err}(Y^*) &+ O \left( \frac{B^2 \varepsilon_f^4 T^{3/2} \mathcal{H}}{\varepsilon_p^2} \right) + O\left( \mathfrak{R}_m(\mathcal{F}) + \sqrt{\frac{\log(1/\beta)}{m}}\right),\\
            \mathrm{err}(\hat{Y}) \geq \mathrm{err}(Y^*) &+ \Omega\left( \frac{|\mathcal{A}| \log(1/\varepsilon_f)}{\sqrt{m}} - \mathfrak{R}_m(\mathcal{F}) \right) + \Omega\left( \sqrt{\frac{\log(1/\beta)}{m}} \right),
        \end{aligned}
    \end{equation}
    where \(\mathcal{H} = \ln(1/\delta) \ln^2(8T|\mathcal{A}|/\delta) \log(|\mathcal{K}| + 1)\). $\mathfrak{R}_m(\mathcal{F})$ is Rademacher complexity of the
classifier family $\mathcal{F}$, \(\mathfrak{R}_m(\mathcal{F}) \leq C m^{-\alpha}\), \(\alpha \leq 1/2\). \(B\) is the \(\ell_1\)-norm bound of dual variables \(\lambda\), \(T\) is the number of training rounds. \(m\) is the client-side sample size. \(|\mathcal{A}|\) is the number of sensitive attribute categories. $\mathcal{K}$ is a set of indices, and each index $k \in \mathcal{K}$ corresponds to a fairness constraint. If fairness constraints is \textit{DemP}, then $|\mathcal{K}| = 2|\mathcal{A}|$, if fairness constraints is \textit{EO}, then $|\mathcal{K}| = 4|\mathcal{A}|$. $\delta$ is the parameter of DP mechanism.
    \label{th: tradeoffs}
\end{theorem}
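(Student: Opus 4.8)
The plan is to decompose the excess error $\mathrm{err}(\hat{Y}) - \mathrm{err}(Y^*)$ into three independently controllable sources and bound each in turn: (i) the \emph{optimization error} incurred by solving the min--max Lagrangian game (\ref{eq: strong_duality}) only approximately after $T$ rounds; (ii) the \emph{privacy-induced estimation error} arising because the cost-sensitive reduction in Algorithm \ref{al: BEST} operates on perturbed sensitive attributes rather than the true ones; and (iii) the \emph{generalization error} separating empirical risk from population risk on finite client samples. Following the reduction framework of Agarwal et al. \cite{agarwal2018reductions}, the fair-classification problem is recast as the zero-sum game between the Learner and Auditor, so that the suboptimality of $\hat{Y}$ is controlled by the duality gap of the approximate saddle point plus the statistical deviation of the estimated constraints; I would adapt this reduction to the federated, privatized setting where the constraint statistics are corrupted by the randomized-response mechanism.

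First I would handle the game-convergence and generalization terms together. The Auditor performs projected gradient ascent on the dual variable over the bounded set $\Lambda_i = \{\lambda_i \ge 0,\ \|\lambda_i\|_1 \le B\}$, while the Learner best-responds via cost-sensitive classification; this is a no-regret dynamic whose average iterate converges to an $O(B/\sqrt{T})$-approximate saddle point by the standard online-gradient-descent regret bound. Combining this saddle-point guarantee with a uniform-convergence argument — invoking the bounded Rademacher complexity assumption $\mathfrak{R}_m(\mathcal{F}) \le C m^{-\alpha}$ together with a McDiarmid and union-bound step holding with probability $1-\beta$ — yields the generalization term $O(\mathfrak{R}_m(\mathcal{F}) + \sqrt{\log(1/\beta)/m})$. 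Because the clients are i.i.d., the federated averaging step preserves these population-level guarantees up to constants.

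The crux is the privacy-fairness coupling term. Here I would track how the randomized-response perturbation of $a_j$ (the exponential mechanism of Definition \ref{de: exponential mechanism}, with flip probability $\bar{\pi} = 1/(|\mathcal{A}|-1+e^{\varepsilon_p})$) corrupts the empirical group frequencies $p_{\tilde{a}_j, y_j}$ and hence the estimated fairness statistics. Using the linear perturbation model (\ref{eq: perturbed demographic statistics}), the map from true to observed group statistics is invertible but with condition number scaling like $1/(\pi-\bar{\pi}) \sim 1/\varepsilon_p$ for small $\varepsilon_p$, so the variance of the de-biased estimates inflates as $1/\varepsilon_p^2$ — exactly the denominator of the coupling term. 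The factor $B^2$ enters because these corrupted frequencies sit in the denominator of the cost term $c_j^1 = \mathbf{1}\{y_j \ne 1\} + (\lambda_i(\tilde{a}_j, y_j) - \mu_{y_j})/p_{\tilde{a}_j, y_j}$, amplifying the error by the squared dual range, while the tolerance factor $\varepsilon_f^4$ enters through the sensitivity of this reduction to the constraint slack. Finally, since the mechanism is applied afresh in each of the $T$ rounds, advanced composition of differential privacy \cite{dwork2014algorithmic} converts the per-round budget into the overall $(\varepsilon_p,\delta)$ guarantee, producing $\mathcal{H} = \ln(1/\delta)\,\ln^2(8T|\mathcal{A}|/\delta)\,\log(|\mathcal{K}|+1)$, with $\ln(1/\delta)$ from the composition theorem, $\ln^2(8T|\mathcal{A}|/\delta)$ from a union bound over the $T$ rounds and $|\mathcal{A}|$ groups, and $\log(|\mathcal{K}|+1)$ from the $|\mathcal{K}|$ simultaneous constraints; the $T^{3/2}$ growth then arises from combining the $O(\sqrt{T})$ summed saddle-point regret with the $\sqrt{T}$-scaled per-round noise forced by composition. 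This explains why $T$ appears in the numerator: more rounds shrink the optimization gap but are dominated by the accumulating privacy cost.

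I expect the main obstacle to be rigorously pinning down the exact exponents in the coupling term, especially the $\varepsilon_f^4$ and $T^{3/2}$ factors, since these arise from the intertwined effect of the fairness slack, the saddle-point iteration count, and the DP composition rather than from any single source. In particular, one must cleanly separate the \emph{bias} introduced by inverting the perturbation model (\ref{eq: perturbed demographic statistics}) from the \emph{variance} amplified by small $\varepsilon_p$, and ensure the union bound over rounds and groups is applied consistently with the advanced-composition accounting so that the $\delta$ and $\ln(1/\delta)$ contributions are not double-counted. Verifying that the federated aggregation degrades the per-client privacy-fairness bound by at most constant factors under the i.i.d. assumption is the final piece needed to assemble the stated inequality.
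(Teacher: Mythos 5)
Your three-term decomposition is the same skeleton the paper uses: its proof splits $\mathrm{err}(\hat{Y})-\mathrm{err}(Y^*)$ into Learner Regret ($e_{LR}$), Auditor Regret ($e_{AR}$), and Generalization Error ($e_G$), and your handling of the generalization piece (bounded Rademacher complexity plus a concentration/union-bound step, following \cite{agarwal2018reductions}) matches the paper's almost verbatim. The divergence is in how the coupling term is obtained. The paper does not derive it at all: it imports Lemma 4.1 and Lemma 4.2 of Jagielski et al.\ \cite{jagielski2019differentially} wholesale, which hand it $e_{LR} = O\bigl(B \varepsilon_f^2 \sqrt{T \ln(1/\delta)}\,\ln m / \varepsilon_p\bigr)$ and $e_{AR} = O\bigl(B^2 \varepsilon_f^4 T^{3/2} \mathcal{H} / \varepsilon_p^2\bigr)$ prepackaged, and then observes that $e_{AR}$ dominates $e_{LR}$, so only the auditor term survives in the statement. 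Every exponent you struggled with --- $B^2$, $\varepsilon_f^4$, $T^{3/2}$, $1/\varepsilon_p^2$ --- and the entire factor $\mathcal{H}$ live inside those cited lemmas; nothing in the paper re-derives them, and the composition-vs-union-bound bookkeeping you worried about double-counting is internal to the citation and never surfaces.

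The genuine gap in your proposal is therefore exactly where you flagged it. Your variance-inflation mechanism for $1/\varepsilon_p^2$ --- inverting the randomized-response channel of Definition \ref{de: exponential mechanism}, whose effective signal shrinks like $\pi - \bar{\pi} \sim \varepsilon_p/|\mathcal{A}|$ for small $\varepsilon_p$ --- is a reasonable story, and is arguably \emph{more} faithful to Algorithm \ref{al: fedpf} (which perturbs attributes) than the cited lemmas, which were proven for a different mechanism in which calibrated noise enters the constraint statistics of the dual updates. But ``$\varepsilon_f^4$ enters through the sensitivity of the reduction to the constraint slack'' and ``$T^{3/2}$ arises from $\sqrt{T}$ regret times $\sqrt{T}$-scaled noise'' are post-hoc rationalizations of a bound you already know, not derivations: your own accounting (an $O(B/\sqrt{T})$ average-iterate saddle-point gap combined with per-round noise) would as naturally produce different exponents, and nothing in the sketch rules those out. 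To close the gap you would either have to redo the regret analysis of \cite{jagielski2019differentially} from scratch under the exponential-mechanism perturbation --- tracking precisely how the corrupted frequencies $p_{\tilde{a}_j, y_j}$ in the cost terms of Algorithm \ref{al: BEST} propagate through both players' regrets --- or do what the paper does and invoke the two lemmas directly, in which case you should also reproduce the paper's final step of discarding the dominated learner-regret term. As written, the bridge from your qualitative mechanisms to the stated inequality is missing.
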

\textbf{Proof sketches are summarized in Appendix, including the convergence argument for the learner-auditor updates.}
\begin{myremark}{Remark 1: Privacy-Fairness Coupling}
\small
Theorem \ref{th: tradeoffs} provides a rigorous upper bound on the excess risk of \textit{FedPF}, decomposing the error into learning regret, auditing regret, and generalization error. Crucially, the term involving $O(1/\varepsilon_p^2)$ reveals a fundamental coupling: as the privacy budget $\varepsilon_p$ decreases (stronger privacy), the noise introduced by the exponential mechanism increases the variance of the auditor's estimates. This reduces the statistical power to detect fairness violations, theoretically confirming that strict privacy protection can "mask" underlying demographic biases.
\end{myremark}

\begin{theorem}[Extended for Non-I.I.D. of Theorem 1]
\label{th: non-iid tradeoffs}
Let $\hat{Y}$ be the output of the \textit{FedPF} algorithm, and $Y^*$ be the optimal classifier satisfying $(\varepsilon_p, \delta)$-differential privacy (DP) and $\varepsilon_f$-fairness constraints (EO, DemP). Assume non-i.i.d. data with bounded heterogeneity $\Gamma = \max_{i,j} \|\nabla \mathcal{L}(D_i) - \nabla \mathcal{L}(D_j)\|^2$ (gradient dissimilarity across clients) and bounded local variance $\sigma^2 = \mathbb{E}_i[\text{Var}(\nabla \mathcal{L}(D_i))]$. If the Rademacher complexity of the classifier family is bounded, with probability at least $1 - \beta$, the following inequalities hold:
\begin{equation}
\footnotesize
\begin{aligned}
&\operatorname{err}(\hat{Y}) \leq \operatorname{err}(Y^*) + O\left(\mathcal{A}\right) + O\left(\mathcal{B} + \frac{\Gamma T}{N} + \sigma^2 \sqrt{\frac{T}{mN}}\right), \\
&\operatorname{err}(\hat{Y}) \geq \operatorname{err}(Y^*) + \Omega\left(\mathcal{C} + \frac{\Gamma T}{N} + \sigma^2 \sqrt{\frac{T}{mN}}\right) + \Omega\left(\sqrt{\frac{\log(1/\beta)}{m}} \right),
\end{aligned}
\end{equation}
where all terms match Theorem 1, $\mathcal{A} = \frac{B^2 \varepsilon_f^4 T^{3/2} \mathcal{H}}{\varepsilon_p^2}$, $\mathcal{B} =  \mathfrak{R}_m(\mathcal{F}) + \sqrt{\frac{\log(1/\beta)}{m}}$, $\mathcal{C}=\frac{|\mathcal{A}| \log(1/\varepsilon_f)}{\sqrt{m}} - \mathfrak{R}_m(\mathcal{F})$, plus non-i.i.d. penalties: $\frac{\Gamma T}{N}$ (heterogeneity drift over $T$ rounds, averaged over $N$ clients) and $\sigma^2 \sqrt{\frac{T}{mN}}$ (local variance amplified by heterogeneity). This extends Theorem 1 by incorporating data heterogeneity, showing that non-i.i.d. distributions inflate both upper and lower bounds on error, exacerbating privacy-fairness-utility tradeoffs.

\end{theorem}

\begin{myremark}{Remark 2: Impact of Data Heterogeneity}
\small
Theorem \ref{th: non-iid tradeoffs} extends the convergence analysis to the distributed non-IID setting, a core challenge in practical federated systems. By introducing the gradient dissimilarity parameter $\Gamma$ and local variance $\sigma^2$, the bound quantifies how data heterogeneity penalizes model utility. The analysis shows that the error bound scales with the degree of distribution shift across clients, but the impact is mitigated by the number of participants $N$. This implies that while \textit{FedPF} is robust to non-IID data, the convergence rate is sensitive to the weight drift caused by extreme local biases.
\end{myremark}

\subsection{Convergence and Robustness}\label{se: Convergence and Robustness}
We analyze robustness to distribution shifts between sensitive groups ($p$) and protected groups ($\hat{p}$) using the Total Variation (TV) distance, i.e. $TV(p, \hat{p})$, which measures distribution divergence between $p$ and $\hat{p}$.


\begin{theorem}[Fairness Discrimination Bound of \textit{FedPF}]
The total fairness discrimination across all clients is bounded by:

\begin{equation}
    \small
         \sum_{i = 1}^{\mathcal{N}} (\gamma_{y, \hat{p},i} (f_i) - \gamma_{y, p, i}(f_i) ) \leq \mathcal{N} \cdot \alpha_{\max},
    \end{equation}

where $\alpha_{\max} = \max_{i \in \mathcal{N}} \{ TV\left(p_i, \hat{p}_i\right)\}$.
    \label{th: fairness_constraints}
\end{theorem}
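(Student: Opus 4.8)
The plan is to reduce the aggregate statement to a single per-client inequality and then sum. The key observation is that, for each client $i$, both $\gamma_{y,\hat{p},i}(f_i)$ and $\gamma_{y,p,i}(f_i)$ are expectations of the \emph{same} bounded classifier output $f_i$ (with values in $[0,1]$), taken with respect to two different group distributions $\hat{p}_i$ and $p_i$. The difference of two expectations of a $[0,1]$-valued function under two distributions is exactly what the total variation distance controls, so the whole theorem follows from the variational (dual) characterization of $TV$ together with the linearity of the sum over clients.

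First I would record the variational identity: for any measurable $g$ with $0 \le g \le 1$ and any two distributions $\mu,\nu$, one has $\mathbb{E}_\mu[g] - \mathbb{E}_\nu[g] \le TV(\mu,\nu)$, where $TV(\mu,\nu) = \sup_A |\mu(A)-\nu(A)| = \tfrac12 \|\mu-\nu\|_1$. This follows from the Hahn--Jordan decomposition $\mu-\nu = (\mu-\nu)^+ - (\mu-\nu)^-$ with $(\mu-\nu)^+(\mathcal{X}) = TV(\mu,\nu)$, since $\int g\, d(\mu-\nu) \le \int 1 \, d(\mu-\nu)^+ - \int 0 \, d(\mu-\nu)^- = TV(\mu,\nu)$; the coefficient is exactly $1$ precisely because the range of $g$ has length one. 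Second, I would instantiate $g = f_i$, $\mu = \hat{p}_i$, $\nu = p_i$, obtaining $\gamma_{y,\hat{p},i}(f_i) - \gamma_{y,p,i}(f_i) \le TV(p_i,\hat{p}_i)$ for every client $i$. Third, I would bound each $TV(p_i,\hat{p}_i)$ by its worst case $\alpha_{\max} = \max_{i \in \mathcal{N}} TV(p_i,\hat{p}_i)$. Finally, summing the $\mathcal{N}$ per-client inequalities yields $\sum_{i=1}^{\mathcal{N}} (\gamma_{y,\hat{p},i}(f_i) - \gamma_{y,p,i}(f_i)) \le \mathcal{N}\,\alpha_{\max}$, which is the claim.

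The main obstacle is not the analytic content — which is essentially one line — but the bookkeeping of definitions. I would first pin down that $\gamma_{y,p,i}$ and $\gamma_{y,\hat{p},i}$ denote the (label-conditioned) expected prediction of the fixed classifier $f_i$ under the sensitive-group distribution $p_i$ and the protected/perturbed distribution $\hat{p}_i$ respectively, so that their difference is genuinely of the form $\mathbb{E}_{\hat{p}_i}[f_i] - \mathbb{E}_{p_i}[f_i]$. I must also confirm the convention $f_i(\cdot) \in [0,1]$ (predicted positive-class probability), since this is exactly what forces the $TV$ coefficient to be $1$ and matches the stated bound; if instead $f_i \in \{0,1\}$, the same inequality holds with the indicator viewed as a $[0,1]$-valued function. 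A final remark worth including is that the bound is deliberately worst-case and hence loose: it is tight only when every client attains the maximal divergence $\alpha_{\max}$ and all per-client differences share the same sign, so the result should be read as a robustness guarantee showing that the aggregate discrimination degrades at most linearly in the per-group distribution shift induced by the privacy mechanism.
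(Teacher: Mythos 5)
Your proposal is correct and takes essentially the same route as the paper: bound each per-client difference $\gamma_{y,\hat{p},i}(f_i) - \gamma_{y,p,i}(f_i)$ by $TV(p_i,\hat{p}_i)$, replace each total-variation term by $\alpha_{\max}$, and sum over the $\mathcal{N}$ clients. The only (minor, and in your favor) difference is in how the key per-client inequality is justified: you derive $\mathbb{E}_{\hat{p}_i}[f_i]-\mathbb{E}_{p_i}[f_i]\leq TV(p_i,\hat{p}_i)$ directly from the Hahn--Jordan decomposition, correctly flagging that the coefficient $1$ hinges on $f_i$ taking values in $[0,1]$, whereas the paper cites the Kantorovich--Rubinstein duality (which properly concerns Wasserstein distance with Lipschitz test functions --- the bounded-function dual of $TV$ that you use is the apt tool) and additionally inserts an add-and-subtract step borrowed from Wang et al.\ that plays no role in the final chain of inequalities, which your streamlined argument rightly omits.
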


\begin{myremark}{Remark 3: Interpretation of Theorem \ref{th: fairness_constraints}}
\small
Theorem \ref{th: fairness_constraints} is a robustness statement: the aggregate fairness-discrimination deviation is upper-bounded by the maximum client-level distribution shift measured by $TV(p_i,\hat p_i)$. Hence, when protected-group distributions drift across clients, the global fairness degradation remains controlled by a deterministic envelope $\mathcal{N}\alpha_{\max}$. This remark clarifies the theorem's scope as a distribution-shift fairness bound rather than a direct claim about non-monotonic utility.
\end{myremark}

\section{Experimental Validation}\label{se: Evalution}
In this section, we provide comprehensive experimental validation of the \textit{FedPF} algorithm's theoretical guarantees. We systematically evaluate the privacy-fairness-utility tradeoffs through controlled experiments, examining how algorithmic parameters affect model performance across different fairness constraints and privacy budgets.
\subsection{Experimental Setup and Methodology}

\subsubsection{Datasets and Problem Formulation}
We evaluate the proposed \textit{FedPF} on three widely-used fairness benchmark datasets, including \textit{Adult}, \textit{Bank}, and \textit{Compas}. 

\begin{itemize}
    \item \textit{Adult} \cite{tran2021differentially}: The dataset includes 45,221 records, containing the personal income of different persons with binary labels. Income over \$ 50K, the label is 1, and the inverse is 0. We chose the \textit{Age} as the sensitive attribute.
    \item \textit{Bank} \cite{tran2021differentially}:
    The dataset contains records of 30,000 customers and their credit card transactions with a bank. The output is the default status (\textit{default\_payment}) (Overdue 1, Not Overdue 0) of the customer for the next month's repayment. We choose the \textit{Age} as the sensitive attribute.
    \item \textit{Compas} \cite{tran2021differentially}: The dataset includes 11,750 criminal records in the US. We set the \textit{score\_text} class as a binary classification problem (0: \textit{Low}, 1: \textit{Medium/High}). We choose the \textit{Sex} as the sensitive attribute.

\end{itemize}

\begin{figure}
    \centering
    \includegraphics[width=0.4\textwidth]{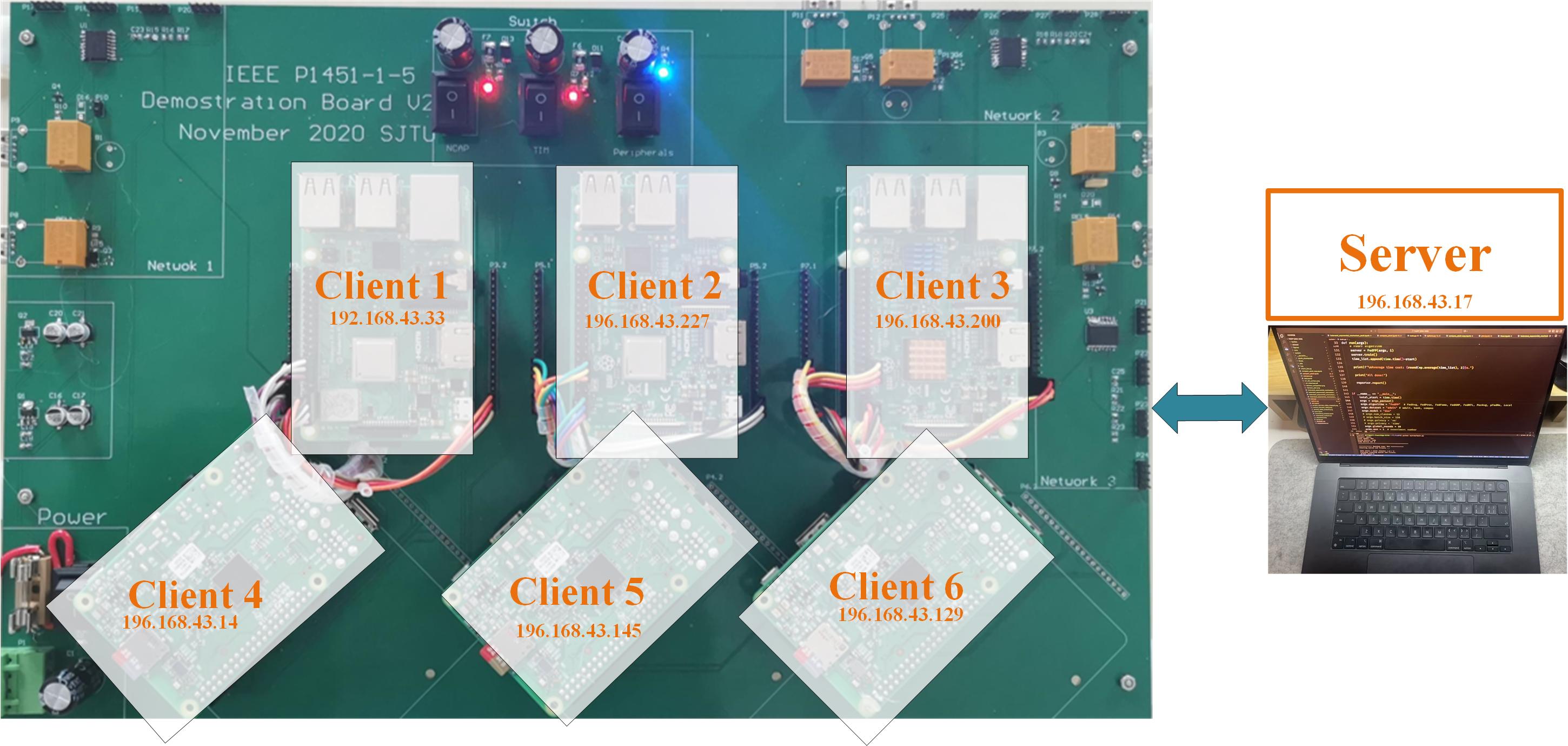}
    \caption{The experimental setup for FL settings, where we consider 6 clients (Raspberry Pi) and a server (Laptop). Each client has a local dataset with sensitive attributes and trains a local model with privacy and fairness constraints. The server aggregates the local models to update the global model.}
    \label{fig: experiment setup}
\end{figure}

\begin{table}
\centering
\caption{End-to-End Training Overhead}
\label{ta: training overhead}
\resizebox{\linewidth}{!}{
\begin{tabular}{ccccc} 
\toprule
\textbf{Method}            & \begin{tabular}[c]{@{}c@{}}\textbf{Avg Round }\\\textbf{Latency (ms)}\end{tabular} & \begin{tabular}[c]{@{}c@{}}\textbf{Client }\\\textbf{Computation (ms)}\end{tabular} & \begin{tabular}[c]{@{}c@{}}\textbf{Server }\\\textbf{Overhead (ms)}\end{tabular} & \begin{tabular}[c]{@{}c@{}}\textbf{Total Training }\\\textbf{Time (s)}\end{tabular}  \\ 
\midrule
Vanilla FL                 & 320                                                                                & 280                                                                                 & 40                                                                               & 32.0                                                                                 \\
+ DP only ($log(\varepsilon_p)=1$) & 355                                                                                & 300                                                                                 & 55                                                                               & 35.5                                                                                 \\
+ Fairness only ($\varepsilon_f=0.1$)           & 370                                                                                & 305                                                                                 & 65                                                                               & 37.0                                                                                 \\
DP + Fairness (\textit{FedPF})      & 410                                                                                & 320                                                                                 & 90                                                                               & 41.0                                                                                 \\
\bottomrule
\end{tabular}}
\label{ta: training overhead}
\end{table}

\subsubsection{Algorithm Baselines} To evaluate the performance and efficiency of the \textit{FedPF} algorithm, we compare four different algorithms, including \textit{FedAvg} \cite{mcmahan2017communication}, \textit{CENTAUR} \cite{shen2023share}, \textit{FedAA} \cite{he2025fedaa} and \textit{FedCEO} \cite{li2025clients}. As far as we know, there is relatively little literature that considers both privacy and fairness. In order to evaluate algorithmic consistency, we have set the above algorithm to consider fairness metrics during local training.

\begin{table}[t]
\centering
\caption{Communication Overhead per Round}
\resizebox{\linewidth}{!}{
\begin{tabular}{lccc}
\toprule
\textbf{Method} & \textbf{Uplink per Client (KB)} & \textbf{Downlink (KB)} & \textbf{Total Comm / Round (KB)} \\
\midrule
Vanilla FL                  & 512 & 512 & 3584 \\
+ DP only ($log(\varepsilon_p)=1$)                  & 512 & 512 & 3584 \\
+ Fairness only ($\varepsilon_f=0.1$)             & 540 & 520 & 3740 \\
DP + Fairness (\textit{FedPF})       & 560 & 540 & 3900 \\
\bottomrule
\end{tabular}}
\label{tab: comm_overhead}
\end{table}

\begin{table*}
\centering
\caption{The performance comparison of fairness and privacy under different privacy budgets $(log(\varepsilon_p))$ on \textit{Adult} dataset.}
\label{ta: comparison of fairness and privacy}
\resizebox{\linewidth}{!}{
\begin{tabular}{ccccccccccccc} 
\toprule
\multirow{2}{*}{\begin{tabular}[c]{@{}c@{}}Privacy\\Budget\end{tabular}} & \multicolumn{5}{c}{Global Mode Accuracy (\%)}                                     &  & \multicolumn{6}{c}{Global Model Discrimination ($\mathcal{G}_{ya}$)}                                                                     \\ 
\cline{2-6}\cline{8-13}
                                                                         & FedAvg \cite{mcmahan2017communication}      & FedAA  \cite{he2025fedaa}     & FedCEO  \cite{li2025clients}    & CENTAUR \cite{shen2023share}    & \textbf{FedPF(Ours)}          &  & FedAvg \cite{mcmahan2017communication}      & FedAA \cite{he2025fedaa}      & FedCEO  \cite{li2025clients}             & CENTAUR  \cite{shen2023share}            & \multicolumn{2}{c}{\textbf{FedPF(Ours)}}           \\ 
\hline
-2                                                                       & 0.840±0.002 & 0.841±0.001 & 0.839±0.001 & 0.839±0.001 & \textbf{0.842}±0.001 &  & 0.853±0.196 & 0.784±0.217 & \textbf{0.656}±0.192 & 0.752±0.194          & \ \ \ \ \underline{0.711}±0.191 (2nd) &                             \\
-1                                                                       & 0.841±0.001 & 0.840±0.001 & 0.839±0.001 & 0.841±0.001 & 0.837±0.003          &  & 0.847±0.198 & 0.723±0.281 & 0.722±0.101          & \textbf{0.667}±0.181 & \multicolumn{2}{c}{\underline{0.715}±0.029 (2nd)}           \\
0                                                                        & 0.841±0.001 & 0.840±0.001 & 0.839±0.001 & 0.841±0.002 & 0.838±0.004          &  & 0.830±0.223 & 0.749±0.250 & 0.807±0.075          & 0.745±0.081          & \multicolumn{2}{c}{\textbf{0.690}±0.229 ($\downarrow$)}   \\
1                                                                        & 0.831±0.015 & 0.831±0.010 & 0.834±0.002 & 0.831±0.012 & \textbf{0.841}±0.019 &  & 0.881±0.178 & 0.783±0.216 & 0.832±0.087          & 0.845±0.232          & \multicolumn{2}{c}{\textbf{0.396}±0.227 ($\downarrow$)} \\
2                                                                        & 0.828±0.011 & 0.831±0.005 & 0.829±0.013 & 0.834±0.006 & \textbf{0.845}±0.014 &  & 0.901±0.143 & 0.827±0.176 & 0.891±0.167          & 0.884±0.171          & \multicolumn{2}{c}{\textbf{0.368}±0.259 ($\downarrow$)}   \\
\bottomrule
\end{tabular}}
\end{table*}

\subsubsection{Implementation Details} In our experiments, we implement the proposed \textit{FedPF} algorithm based on the PyTorch framework. The privacy mechanism used is the Exponential Mechanism (Definition \ref{de: exponential mechanism}) to ensure differential privacy during local model training on each client. We set the privacy budget $\varepsilon_p$ to vary in $\{0.01, 0.1, 1.0, 10.0, 100.0\}$ to evaluate the impact of different privacy levels on model performance. The fairness tolerance parameter $\varepsilon_f$ is set to vary in $\{0.01, 0.1, 1.0\}$ to assess the effect of different fairness constraints. Figure \ref{fig: experiment setup} illustrates our real-world federated learning testbed based on IEEE P1451 protocol \cite{homeb}, which consists of six Raspberry Pi devices acting as clients and a laptop serving as the central aggregator. Each client maintains a local dataset containing sensitive attributes and performs on-device training with privacy and fairness constraints enabled. The server coordinates the training process by aggregating local model updates and broadcasting the updated global model. This setup captures key system characteristics of practical FL deployments, including resource-constrained edge devices, decentralized data ownership, and realistic communication patterns.
\begin{itemize}
    \item \textbf{$\varepsilon_f$ without $\varepsilon_p$ in \textit{FedPF}}: This algorithm is the baseline without privacy protection, to evaluate the improvement of the different group fairness constraints of \textit{FedPF} algorithm.
    \item \textbf{$\varepsilon_p$ without $\varepsilon_f$ in \textit{FedPF}}: This algorithm is the baseline without fairness constraints, to evaluate the improvement of the different privacy protection levels of \textit{FedPF} algorithm.
    \item \textbf{$\varepsilon_p$ and $\varepsilon_f$ in \textit{FedPF}}: This algorithm considers both privacy and fairness constraints our proposed one.
\end{itemize}

In the experiment, we use the \textit{FedAvg} algorithm on the server to aggregate the global model. The number of clients is 6. The local and global rounds are 1 and 5, respectively. We use a simple feedforward neural network consisting of three fully connected layers. The input is first projected to a hidden dimension with a ReLU activation, followed by a second linear layer mapping to 200 dimensions, and finally, an output layer producing a 2-dimensional prediction. The input dimensions of the model are 12, 22, and 12 based on \textit{Adult}, \textit{Bank}, and \textit{Compas} datasets, respectively. The local batch size is 128. Please refer to the github for the detailed experimental settings.

\subsubsection{Evaluation Metrics} To evaluate the influence of fairness and privacy on the FL global model performance, as the same with the existing works  \cite{tran2021differentially},  we adopt the fairness \textit{Discrimination} ($\mathcal{G}_{ya}$) (Definition \ref{de: epsilon_f fair classifier}) and \textit{Error} (err). We test the following four questions (\textbf{Q1}-\textbf{Q3}) to analysis the FL performance under privacy and fairness constraints. 

\subsection{Performance Analysis}


Table \ref{ta: training overhead} reports the end-to-end training overhead under different system configurations. Compared to vanilla FL, enabling differential privacy alone increases the average round latency from 320 ms to 355 ms, primarily due to additional client-side perturbation and server-side noise processing. Enforcing fairness constraints introduces further overhead, resulting in an average round latency of 370 ms, as fairness evaluation and constraint handling incur extra computation on both the client and server. When privacy and fairness are jointly enforced (FedPF), the average round latency increases to 410 ms, corresponding to an overall training time of 41.0 s for 100 rounds. This represents a moderate overhead of approximately 28   \% compared to vanilla FL. Notably, the additional cost is dominated by server-side processing, reflecting the complexity of aggregating noisy updates while enforcing fairness constraints. These results demonstrate that, even on a real-world FL testbed with resource-limited clients, the proposed approach remains practical and introduces manageable system overhead while enabling simultaneous privacy protection and fairness enforcement.

\begin{figure*}[t]
    \centering
    \begin{minipage}{0.24\linewidth}
        \centering
        \centerline{
        \includegraphics[width=1\textwidth]{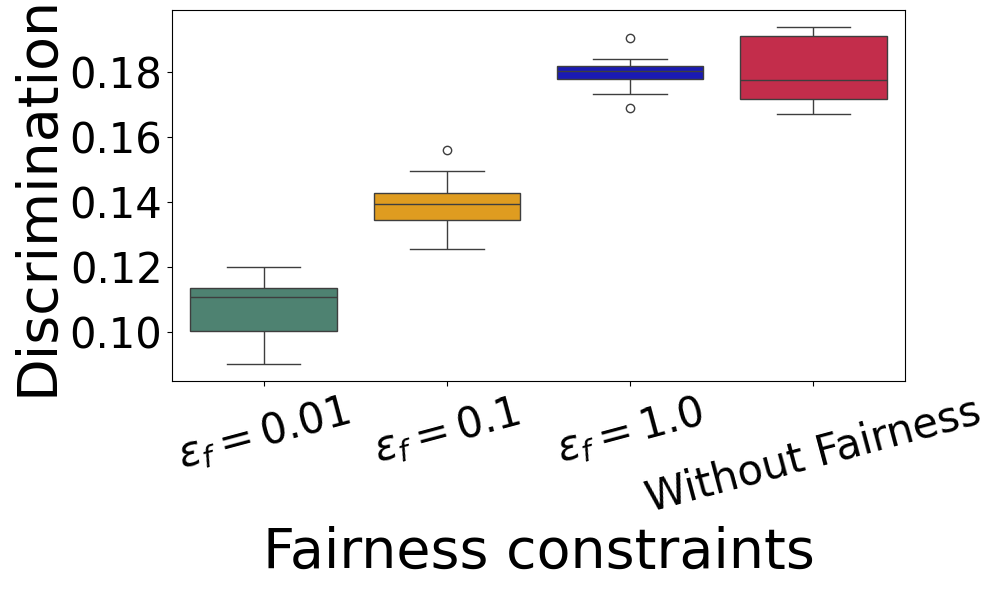}
        }
        \centerline{(\textit{a}) $\mathcal{G}_{ya}$ \textit{vs} $\varepsilon_f$ on \textit{Adult}}
    \end{minipage}
    \begin{minipage}{0.24\linewidth}
        \centering
        \centerline{
        \includegraphics[width=1\textwidth]{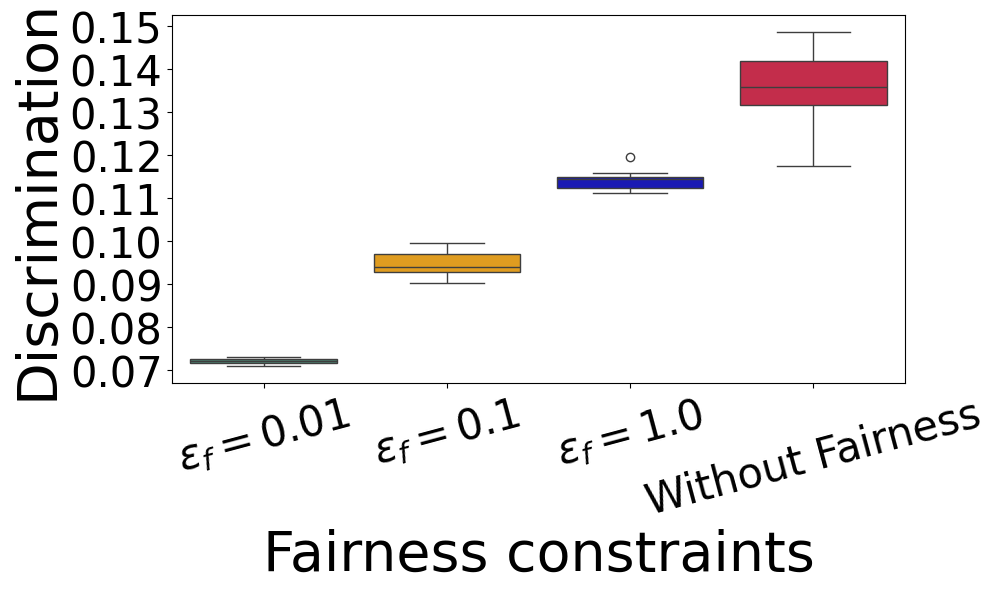}
        }
        \centerline{(\textit{b}) $\mathcal{G}_{ya}$ \textit{vs} $\varepsilon_f$ on \textit{Bank}}
    \end{minipage}
    \begin{minipage}{0.24\linewidth}
        \centering
        \centerline{
        \includegraphics[width=1\textwidth]{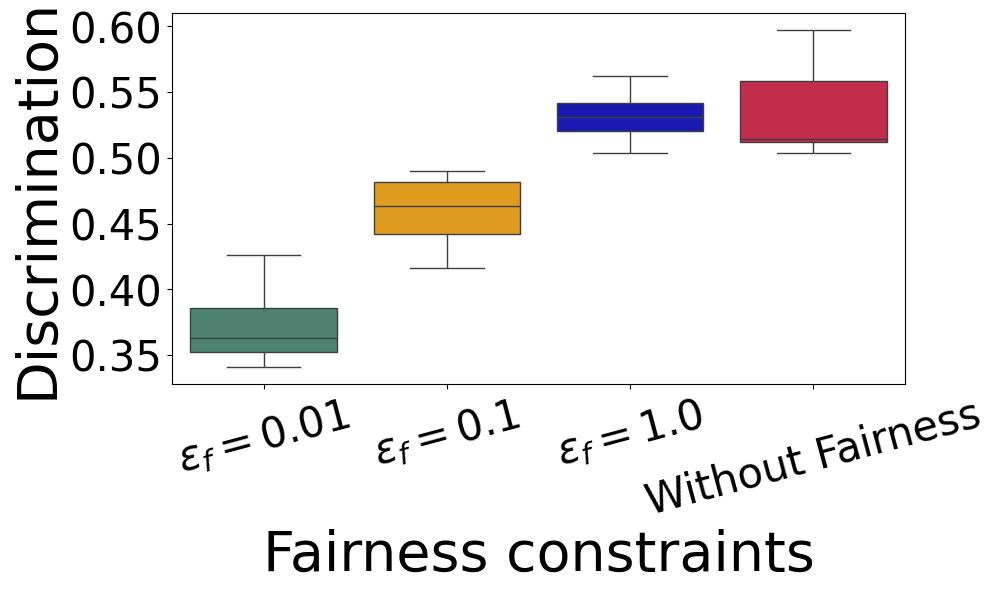}
        }
        \centerline{(\textit{c}) $\mathcal{G}_{ya}$ \textit{vs} $\varepsilon_f$ on \textit{Compas}}
    \end{minipage}
    \caption{The fairness constraints of \textit{FedPF} algorithm influence on the discrimination ($\mathcal{G}_{ya}$) without privacy protection in FL.}
    \label{fig: fair_without_privacy}
\end{figure*}

\begin{figure}[t]
    \centering
    \begin{minipage}{0.3\linewidth}
        \centering
        \centerline{
        \includegraphics[width=1\textwidth]{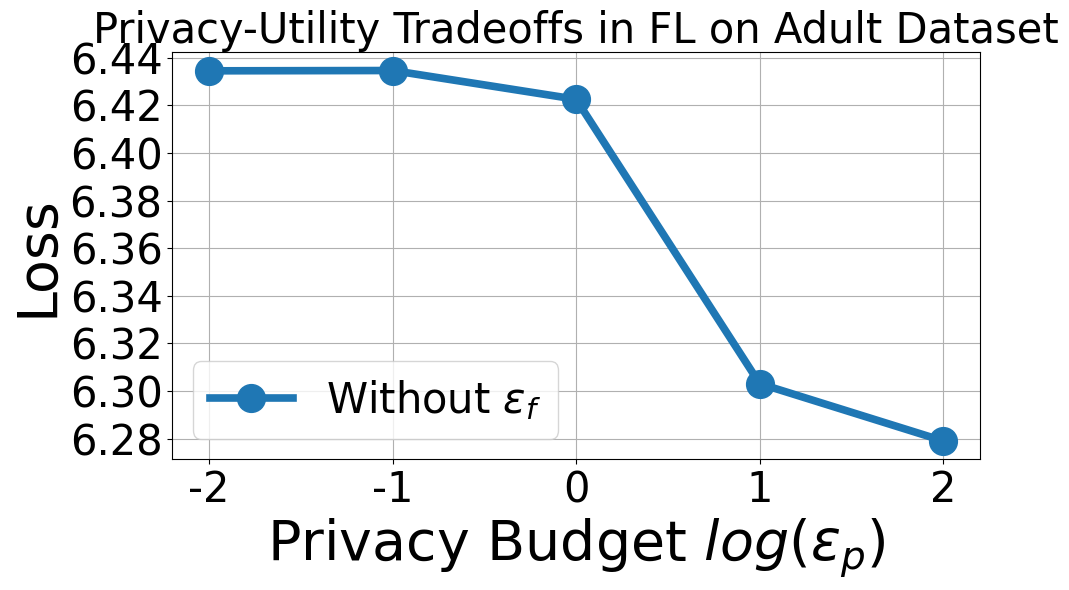}
        }
        \centerline{(\textit{a}) \textit{Loss vs} $\varepsilon_p$}
    \end{minipage}
    \begin{minipage}{0.3\linewidth}
        \centering
        \centerline{
        \includegraphics[width=1\textwidth]{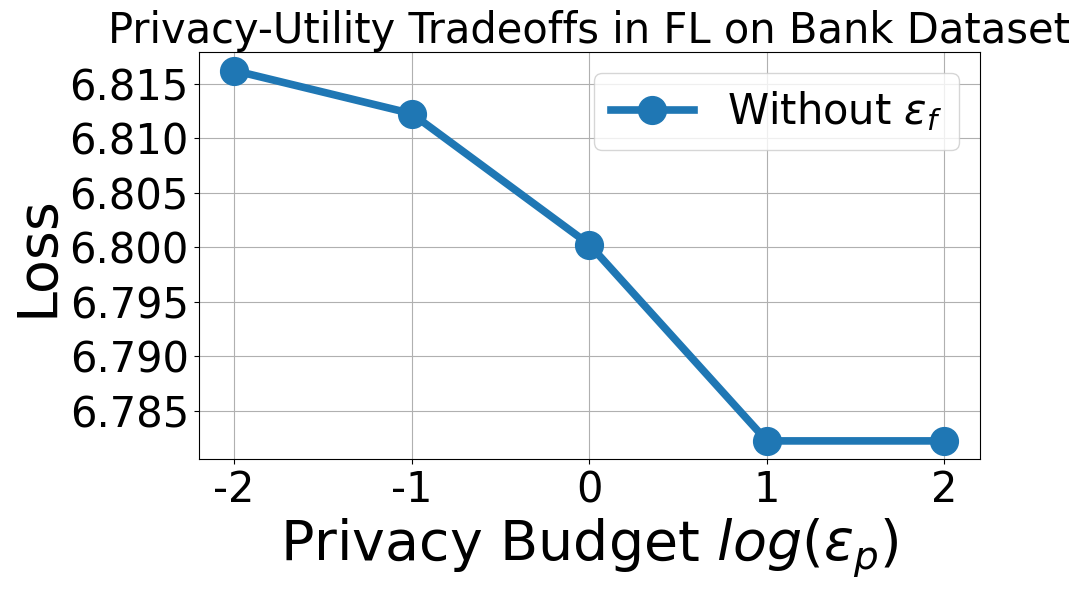}
        }
        \centerline{(\textit{b}) \textit{Loss vs} $\varepsilon_p$}
    \end{minipage}
    \begin{minipage}{0.3\linewidth}
        \centering
        \centerline{
        \includegraphics[width=1\textwidth]{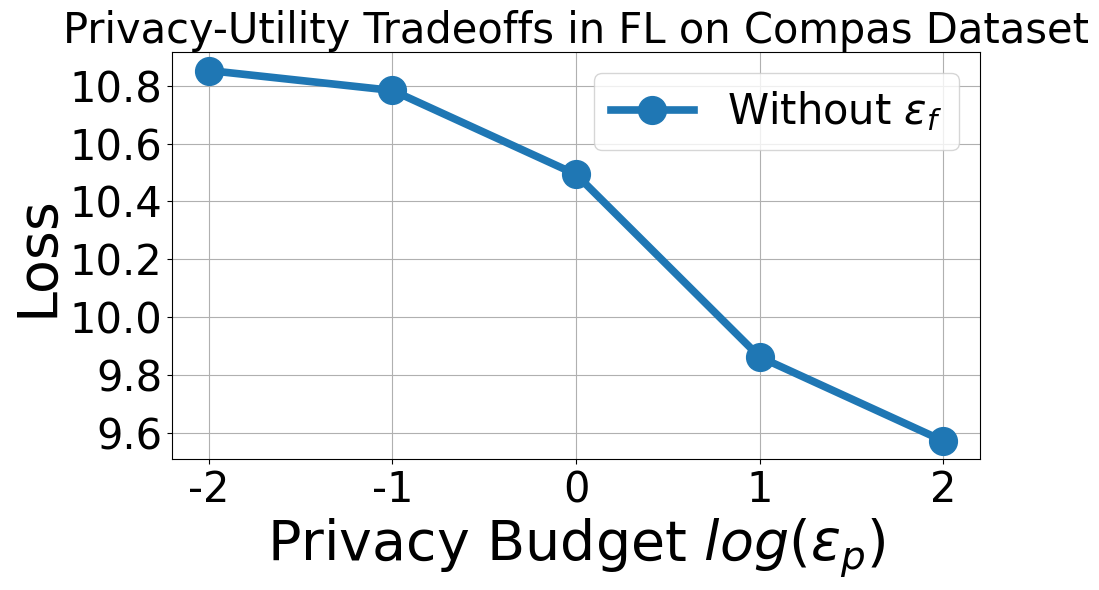}
        }
        \centerline{(\textit{c}) \textit{Loss vs} $\varepsilon_p$}
    \end{minipage}
    \caption{The privacy $\varepsilon_p$ of \textit{FedPF} algorithm influence on the loss of server model without fairness constraints in FL based on \textit{Adult}, \textit{Bank} and \textit{Compas} datasets, respectively.}
    \label{fig: loss_privacy}
\end{figure}


Table~\ref{tab: comm_overhead} reports the communication overhead per federated learning round across different methods. Vanilla FL and the DP-only variant exhibit identical communication costs, which are 512 KB uplink and downlink per client, as differential privacy is applied locally and does not alter model size. Adding fairness constraints alone increases uplink to 540 KB and downlink to 520 KB due to the larger ensemble of predictors maintained by the Exponentiated Gradient reduction in \textit{FedPF}'s fairness module. The full \textit{FedPF} method, which jointly enforces both privacy and fairness, incurs the highest overhead. It amounts to 560 KB uplink and 540 KB downlink per client, resulting in a total of 3900 KB per round, an approximately 8.8\% increase over vanilla FL. This modest overhead is primarily attributable to the weighted ensemble representation in the fairness-aware local models and remains practical for deployment on resource-constrained devices, demonstrating that \textit{FedPF} achieves substantial improvements in privacy-fairness trade-offs with only marginal additional communication cost.

In Table \ref{ta: comparison of fairness and privacy}, we compared the performance of different algorithms on various datasets, specifically including the model's accuracy and fairness metrics. We compared the proposed \textit{FedPF} algorithm with other baseline algorithms to verify its effectiveness under privacy protection and fairness constraints. Based on the average results from three independent experiments, \textit{FedPF} (Ours) demonstrates a superior ability to resolve the complex trilemma between privacy protection, fairness, and model utility. The data clearly indicates that \textit{FedPF} outperforms state-of-the-art baselines (such as FedAvg, FedAA, and CENTAUR) in fairness optimization by a significant margin. Specifically, under a high privacy budget ($log(\varepsilon) = 2$), while traditional methods suffer from extreme demographic bias with an discrimination ($\mathcal{G}_{ya}$) near 1.0, \textit{FedPF} successfully reduces this disparity to 0.3688, a reduction in bias of approximately 60\%. Remarkably, even under strict privacy constraints, \textit{FedPF} maintains the lowest bias levels across all tested algorithms while preserving a high utility of approximately 84\% accuracy. These findings validate that our game-theoretic dual-update mechanism (\textit{FedPF}) effectively captures and corrects demographic biases, establishing \textit{FedPF} as a new benchmark for achieving \textit{private and fair} collaborative training without compromising core model performance in federated learning framework.

\begin{figure}[t]
     \centering
    \begin{minipage}{0.3\linewidth}
        \centering
        \centerline{
        \includegraphics[width=1\textwidth]{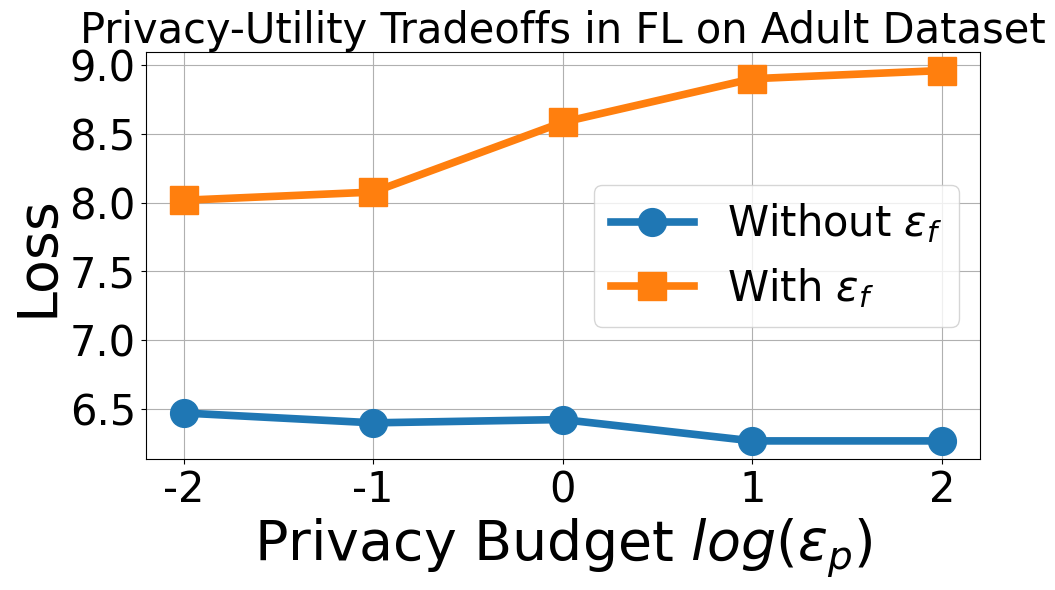}
        }
        \centerline{(\textit{a}) \textit{Loss} \textit{vs} $\varepsilon_p$}
    \end{minipage}
    \begin{minipage}{0.3\linewidth}
        \centering
        \centerline{
        \includegraphics[width=1\textwidth]{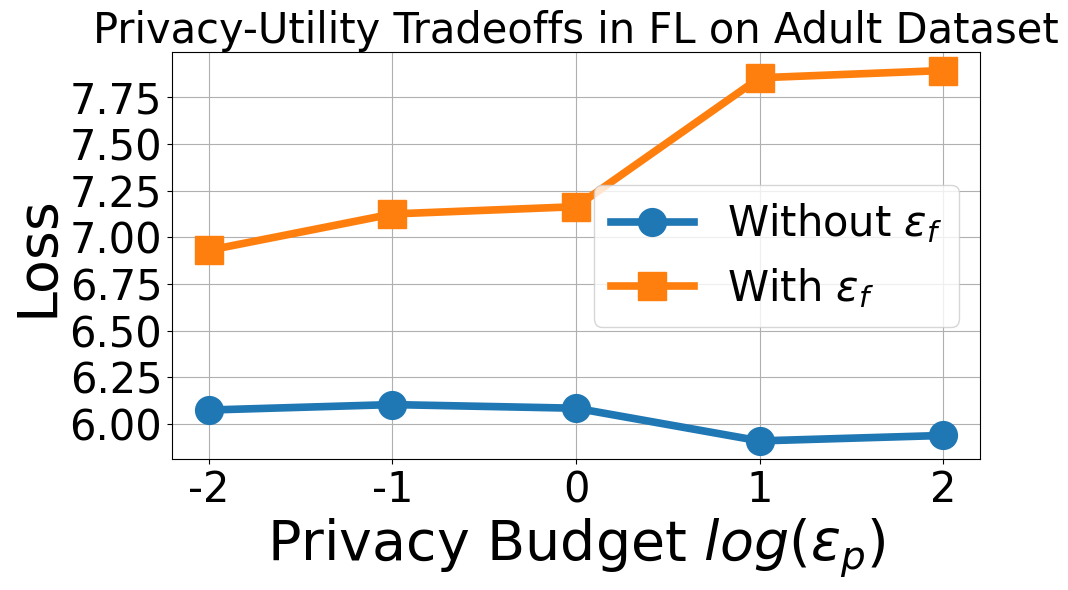}
        }
        \centerline{(\textit{b}) \textit{Loss} \textit{vs} $\varepsilon_p$}
    \end{minipage}
    \begin{minipage}{0.3\linewidth}
        \centering
        \centerline{
        \includegraphics[width=1\textwidth]{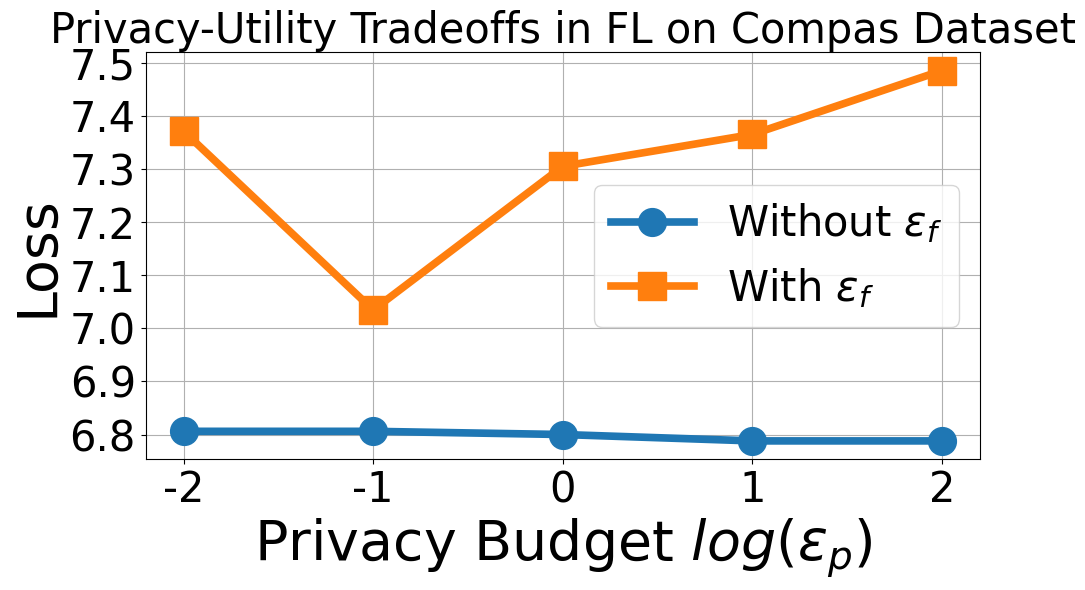}
        }
        \centerline{(\textit{c}) \textit{Loss} \textit{vs} $\varepsilon_p$}
    \end{minipage}

    \begin{minipage}{0.3\linewidth}
        \centering
        \centerline{
        \includegraphics[width=1\textwidth]{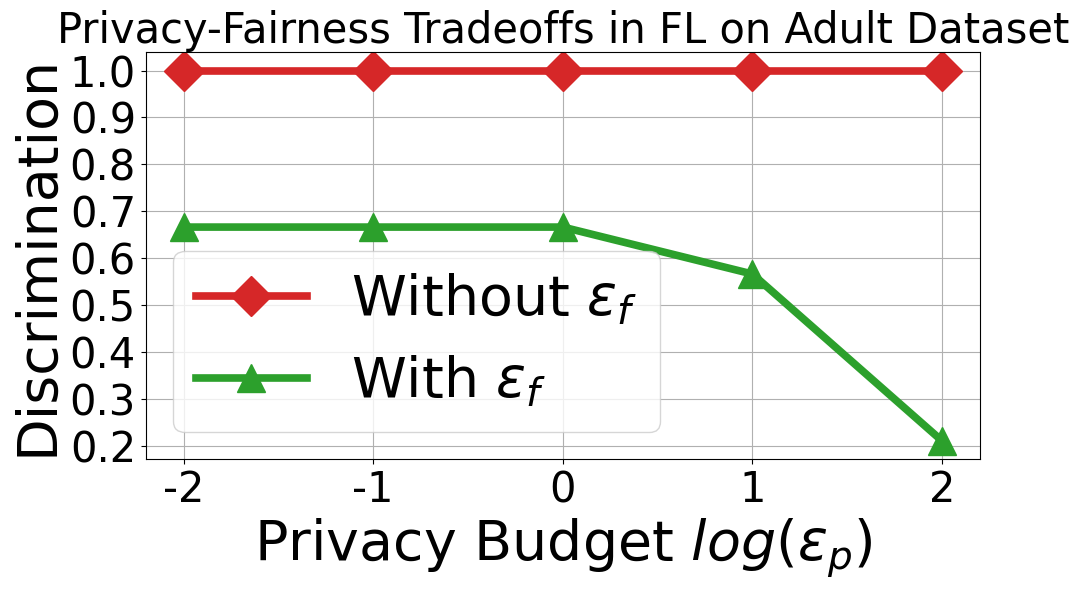}
        }
        \centerline{(\textit{d}) $\mathcal{G}_{ya}$ \textit{vs} $\varepsilon_p$}
    \end{minipage}
    \begin{minipage}{0.3\linewidth}
        \centering
        \centerline{
        \includegraphics[width=1\textwidth]{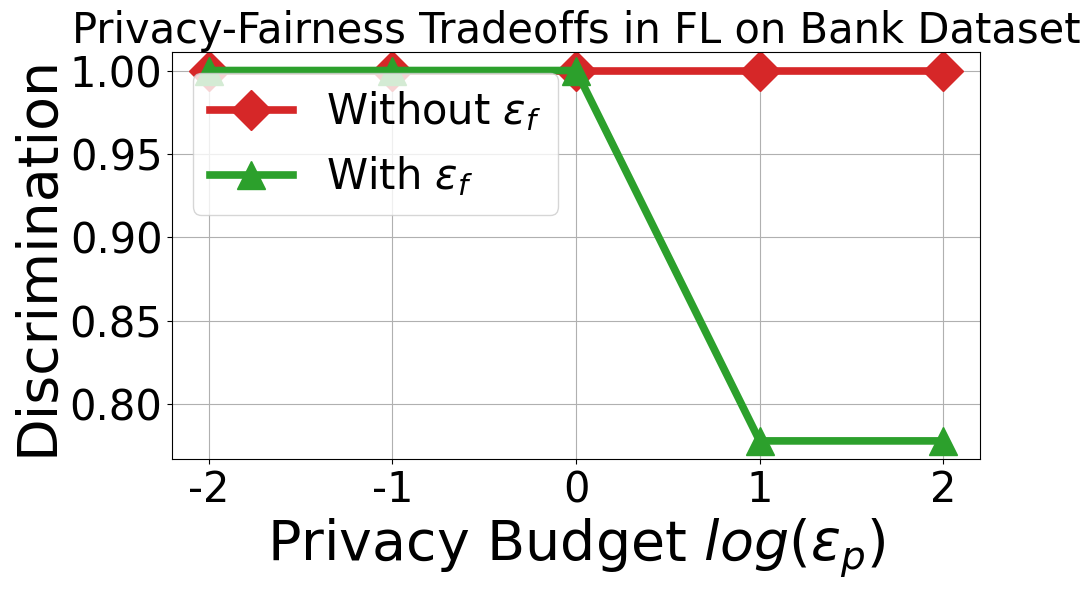}
        }
        \centerline{(\textit{e}) $\mathcal{G}_{ya}$ \textit{vs} $\varepsilon_p$}
    \end{minipage}
    \begin{minipage}{0.3\linewidth}
        \centering
        \centerline{
        \includegraphics[width=1\textwidth]{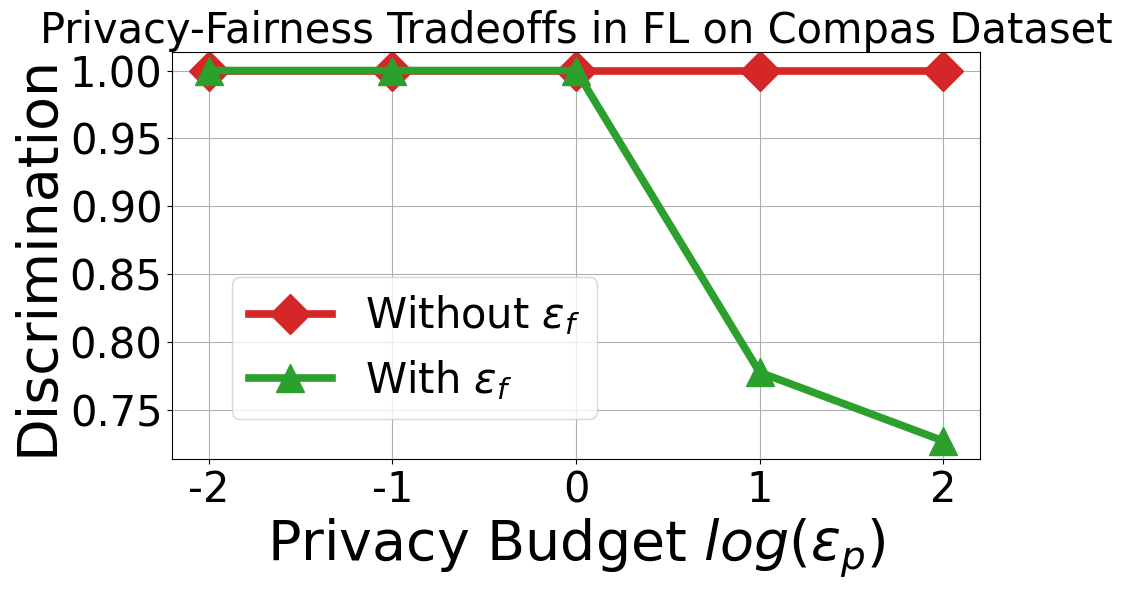}
        }
        \centerline{(\textit{f}) $\mathcal{G}_{ya}$ \textit{vs} $\varepsilon_p$}
    \end{minipage}
    \caption{The privacy budget of \textit{FedPF} algorithm influence on the loss and the discrimination (\textit{EO}) of server model in FL based on \textit{FedPF} algorithm. The fairness constraints include \textit{without fairness constraints} and \textit{with fairness constraints} ($\varepsilon_f = 0.1$) lines. The sensitive attributes in \textit{Adult}, \textit{Bank} and \textit{Compas} datasets are \textit{Age}, \textit{Age} and \textit{Sex}, respectively.}
    \label{fig: fair_privacy_utility}
\end{figure}

\subsection{Impact of Fairness and Privacy Constraints of \textit{FedPF}}

\textbf{Q1}: \textit{How do fairness constraints affect discrimination without privacy considerations?}

To isolate the effect of fairness constraints, we configure \textit{FedPF} without privacy protection ($\varepsilon_p \rightarrow \infty$) and vary the fairness tolerance parameter $\varepsilon_f \in \{0.01, 0.1, 1.0\}$, comparing against a baseline \textit{without fairness} constraints. Fig. \ref{fig: fair_without_privacy} demonstrates the discrimination reduction achieved by fairness constraints across all three datasets. The results show a consistent inverse relationship between fairness constraint strength and discrimination levels. When $\varepsilon_f =0.01$, the \textit{FedPF} algorithmic discrimination ($\mathcal{G}_{ya}$) of the server model decreased by 38.9\%, 42.9\%, and 29.4\% on the \textit{Adult}, \textit{Bank}, and \textit{Compas} datasets, respectively, compared to the case with no fairness constraint.

\textbf{Key Observation 1}: Fig. \ref{fig: fair_without_privacy} shows that \textit{stricter fairness constraints (smaller $\varepsilon_f$) consistently lead to more significant discrimination reduction across all datasets.} These results provide empirical validation of fairness-utility tradeoff predictions in Theorem \ref{th: tradeoffs}. The consistent discrimination reduction demonstrates that our fairness constraints effectively suppress discriminatory patterns.


\textbf{Q2}: \textit{How does privacy budget affect model utility without fairness constraints?}

We examine the privacy-utility relationship by fixing fairness constraints to be inactive ($\varepsilon_f \rightarrow \infty$) and varying the privacy budget $\varepsilon_p \in [0.01, 100]$ across our three datasets. Fig. \ref{fig: loss_privacy} illustrates the relationship between privacy budget and model loss. Consistent with differential privacy theory, we observe an inverse correlation between privacy cost and model performance:

\textbf{Key Observation 2}: Fig. \ref{fig: loss_privacy} shows that as the privacy cost increases, without considering fairness, the loss of the FL server model gradually decreases. The privacy cost is inversely proportional to the loss of the model, which is consistent with the theoretical derivation in Theorem \ref{th: tradeoffs}.

\subsection{Privacy-Fairness-Utility Tradeoff Analysis of \textit{FedPF}}
\textbf{Q3}: \textit{How do privacy and fairness constraints interact to affect overall system performance?}

Now we examine examines the complex three-way interaction between privacy, fairness, and utility by jointly varying both $\varepsilon_p$ and $\varepsilon_f$ parameters, revealing non-intuitive relationships that extend beyond simple pairwise tradeoffs. Fig. \ref{fig: fair_privacy_utility} reveals a surprising non-monotonic relationship between privacy budget and model loss when fairness constraints are active. We observe two distinct behavioral regimes. 
\begin{itemize}
    \item \textbf{Low $\varepsilon_p$}: When privacy budget is limited, model loss decreases in $\varepsilon_p$. In this regime, the privacy-fairness coupling term in Theorem \ref{th: tradeoffs} and Theorem \ref{th: non-iid tradeoffs} dominates the total error.
    \item \textbf{High $\varepsilon_p$}: When privacy budget exceeds a threshold, model loss begins to increase in $\varepsilon_p$. The generalization error ($e_G$) term becomes dominant in Theorem \ref{th: tradeoffs} and Theorem \ref{th: non-iid tradeoffs}.
\end{itemize}

\textbf{Key Observation 3}: \textit{The fairness constraint fundamentally alters the privacy-utility relationship, creating a non-monotonic curve where optimal utility occurs at intermediate privacy levels rather than maximum privacy budgets.}

\section{Conclusions}\label{sec: conclusion}

This paper addresses the critical challenge of simultaneously achieving privacy, fairness, and utility in FL. We introduce the $\varepsilon_f$-fair constraint for FL environments and develop the \textit{FedPF} algorithm through Lagrangian min-max optimization. Our theoretical analysis establishes fundamental privacy-fairness-utility tradeoffs (Theorem \ref{th: tradeoffs} and Theorem \ref{th: non-iid tradeoffs}) and convergence guarantees (Theorem \ref{th: fairness_constraints}). Most significantly, we reveal counter-intuitive insights, i.e., fairness constraints create non-monotonic privacy-utility relationships where optimal performance occurs at intermediate privacy budgets. Experimental validation on \textit{Adult}, \textit{Bank}, and \textit{Compas} datasets confirms these predictions, demonstrating up to 42.9\% discrimination reduction while maintaining differential privacy guarantees. And, hardware-level simulations demonstrate that FedPF maintains a low computational footprint, making it suitable for resource-constrained edge devices.

\textit{Limitations.} Our hardware testbed includes 6 clients, which validates feasibility but limits conclusions about large-scale heterogeneous deployments. Theorem \ref{th: non-iid tradeoffs}'s $\Gamma T/N$ term suggests that increasing client count $N$ can mitigate non-i.i.d. drift; we plan to validate this on 50+ clients in healthcare and financial federations as future work.



\small

\bibliography{reference}

@inproceedings{mcmahan2017communication,
  title={Communication-efficient learning of deep networks from decentralized data},
  author={McMahan, Brendan and Moore, Eider and Ramage, Daniel and Hampson, Seth and y Arcas, Blaise Aguera},
  booktitle={Artificial Intelligence and Statistics},
  pages={1273--1282},
  year={2017},
  organization={PMLR}
}

@article{dwork2014algorithmic,
  title={The algorithmic foundations of differential privacy},
  author={Dwork, Cynthia and Roth, Aaron and others},
  journal={Foundations and Trends{\textregistered} in Theoretical Computer Science},
  volume={9},
  number={3--4},
  pages={211--407},
  year={2014},
  publisher={Now Publishers, Inc.}
}

@article{hardt2016equality,
  title={Equality of opportunity in supervised learning},
  author={Hardt, Moritz and Price, Eric and Srebro, Nati},
  journal={Advances in Neural Information Processing Systems},
  volume={29},
  year={2016}
}

@inproceedings{laakom2025fairness,
  title={Fairness Overfitting in Machine Learning: An Information-Theoretic Perspective},
  author={Laakom, Firas and Chen, Haobo and Schmidhuber, J{\"u}rgen and Bu, Yuheng},
  booktitle={International Conference on Machine Learning},
  pages={32078--32115},
  year={2025},
  organization={PMLR}
}

@inproceedings{corbucci2024puffle,
  title={PUFFLE: balancing privacy, utility, and fairness in federated learning},
  author={Corbucci, Luca and Heikkil{\"a}, Mikko A and Noguero, David Solans and Monreale, Anna and Kourtellis, Nicolas},
  booktitle={ECAI 2024: 27th European Conference on Artificial Intelligence, 19--24 October 2024, Santiago de Compostela, Spain--Including 13th Conference on Prestigious Applications of Intelligent Systems (PAIS 2024)},
  pages={1639--1647},
  year={2024},
  organization={SAGE Publications Pvt. Ltd 1 Oliver's Yard, 55 City Road, London, EC1Y 1SP}
}

@article{gu2022privacy,
  title={Privacy, accuracy, and model fairness trade-offs in federated learning},
  author={Gu, Xiuting and Tianqing, Zhu and Li, Jie and Zhang, Tao and Ren, Wei and Choo, Kim-Kwang Raymond},
  journal={Computers \& Security},
  volume={122},
  pages={102907},
  year={2022},
  publisher={Elsevier}
}

@inproceedings{sun2023toward,
  title={Toward the tradeoffs between privacy, fairness and utility in federated learning},
  author={Sun, Kangkang and Zhang, Xiaojin and Lin, Xi and Li, Gaolei and Wang, Jing and Li, Jianhua},
  booktitle={International Symposium on Emerging Information Security and Applications},
  pages={118--132},
  year={2023},
  organization={Springer}
}

@inproceedings{li2019fair,
  title={Fair Resource Allocation in Federated Learning},
  author={Li, Tian and Sanjabi, Maziar and Beirami, Ahmad and Smith, Virginia},
  booktitle={International Conference on Learning Representations}
}

@inproceedings{martinez2020minimax,
  title={Minimax pareto fairness: A multi objective perspective},
  author={Martinez, Natalia and Bertran, Martin and Sapiro, Guillermo},
  booktitle={International Conference on Machine Learning},
  pages={6755--6764},
  year={2020},
  organization={PMLR}
}

@inproceedings{yu2020fairness,
  title={A fairness-aware incentive scheme for federated learning},
  author={Yu, Han and Liu, Zelei and Liu, Yang and Chen, Tianjian and Cong, Mingshu and Weng, Xi and Niyato, Dusit and Yang, Qiang},
  booktitle={Proceedings of the AAAI/ACM Conference on AI, Ethics, and Society},
  pages={393--399},
  year={2020}
}

@article{kusner2017counterfactual,
  title={Counterfactual fairness},
  author={Kusner, Matt J and Loftus, Joshua and Russell, Chris and Silva, Ricardo},
  journal={Advances in Neural Information Processing Systems},
  volume={30},
  year={2017}
}

@article{kairouz2021advances,
  title={Advances and open problems in federated learning},
  author={Kairouz, Peter and McMahan, H Brendan and Avent, Brendan and Bellet, Aur{\'e}lien and Bennis, Mehdi and Bhagoji, Arjun Nitin and Bonawitz, Kallista and Charles, Zachary and Cormode, Graham and Cummings, Rachel and others},
  journal={Foundations and Trends{\textregistered} in Machine Learning},
  volume={14},
  number={1--2},
  pages={1--210},
  year={2021},
  publisher={Now Publishers, Inc.}
}

@article{shao2023survey,
  title={A survey of what to share in federated learning: Perspectives on model utility, privacy leakage, and communication efficiency},
  author={Shao, Jiawei and Li, Zijian and Sun, Wenqiang and Zhou, Tailin and Sun, Yuchang and Liu, Lumin and Lin, Zehong and Mao, Yuyi and Zhang, Jun},
  journal={arXiv preprint arXiv:2307.10655},
  year={2023}
}

@inproceedings{hao2021towards,
  title={Towards fair federated learning with zero-shot data augmentation},
  author={Hao, Weituo and El-Khamy, Mostafa and Lee, Jungwon and Zhang, Jianyi and Liang, Kevin J and Chen, Changyou and Duke, Lawrence Carin},
  booktitle={Proceedings of the IEEE/CVF Conference on Computer Vision and Pattern Recognition},
  pages={3310--3319},
  year={2021}
}

@article{duan2020self,
  title={Self-balancing federated learning with global imbalanced data in mobile systems},
  author={Duan, Moming and Liu, Duo and Chen, Xianzhang and Liu, Renping and Tan, Yujuan and Liang, Liang},
  journal={IEEE Transactions on Parallel and Distributed Systems},
  volume={32},
  number={1},
  pages={59--71},
  year={2020},
  publisher={IEEE}
}

@article{xu2021privacy,
  title={Privacy-preserving machine learning: Methods, challenges and directions},
  author={Xu, Runhua and Baracaldo, Nathalie and Joshi, James},
  journal={arXiv preprint arXiv:2108.04417},
  year={2021}
}

@article{li2025clients,
  title={Clients collaborate: Flexible differentially private federated learning with guaranteed improvement of utility-privacy trade-off},
  author={Li, Yuecheng and Fu, Lele and Wang, Tong and Lou, Jian and Chen, Bin and Yang, Lei and Shen, Jian and Zheng, Zibin and Chen, Chuan},
  journal={Proceedings of the 42nd International Conference on Machine Learning},
  year={2025}
}

@inproceedings{he2025fedaa,
  title={FedAA: A Reinforcement Learning Perspective on Adaptive Aggregation for Fair and Robust Federated Learning},
  author={He, Jialuo and Chen, Wei and Zhang, Xiaojin},
  booktitle={Proceedings of the AAAI Conference on Artificial Intelligence},
  volume={39},
  number={16},
  pages={17085--17093},
  year={2025}
}

@article{shen2023share,
  title={Share your representation only: Guaranteed improvement of the privacy-utility tradeoff in federated learning},
  author={Shen, Zebang and Ye, Jiayuan and Kang, Anmin and Hassani, Hamed and Shokri, Reza},
  journal={arXiv preprint arXiv:2309.05505},
  year={2023}
}

@inproceedings{farrand2020neither,
  title={Neither private nor fair: Impact of data imbalance on utility and fairness in differential privacy},
  author={Farrand, Tom and Mireshghallah, Fatemehsadat and Singh, Sahib and Trask, Andrew},
  booktitle={Proceedings of the 2020 Workshop on Privacy-preserving Machine Learning in Practice},
  pages={15--19},
  year={2020}
}

@inproceedings{ganev2022robin,
  title={Robin hood and matthew effects: Differential privacy has disparate impact on synthetic data},
  author={Ganev, Georgi and Oprisanu, Bristena and De Cristofaro, Emiliano},
  booktitle={International Conference on Machine Learning},
  pages={6944--6959},
  year={2022},
  organization={PMLR}
}

@article{bagdasaryan2019differential,
  title={Differential privacy has disparate impact on model accuracy},
  author={Bagdasaryan, Eugene and Poursaeed, Omid and Shmatikov, Vitaly},
  journal={Advances in Neural Information Processing Systems},
  volume={32},
  year={2019}
}

@inproceedings{esipova2022disparate,
  title={Disparate Impact in Differential Privacy from Gradient Misalignment},
  author={Esipova, Maria S and Ghomi, Atiyeh Ashari and Luo, Yaqiao and Cresswell, Jesse C},
  booktitle={The Eleventh International Conference on Learning Representations}
}

@article{berk2021fairness,
  title={Fairness in criminal justice risk assessments: The state of the art},
  author={Berk, Richard and Heidari, Hoda and Jabbari, Shahin and Kearns, Michael and Roth, Aaron},
  journal={Sociological Methods \& Research},
  volume={50},
  number={1},
  pages={3--44},
  year={2021},
  publisher={Sage Publications Sage CA: Los Angeles, CA}
}

@inproceedings{agarwal2018reductions,
  title={A reductions approach to fair classification},
  author={Agarwal, Alekh and Beygelzimer, Alina and Dud{\'\i}k, Miroslav and Langford, John and Wallach, Hanna},
  booktitle={International Conference on Machine Learning},
  pages={60--69},
  year={2018},
  organization={PMLR}
}

@inproceedings{jagielski2019differentially,
  title={Differentially private fair learning},
  author={Jagielski, Matthew and Kearns, Michael and Mao, Jieming and Oprea, Alina and Roth, Aaron and Sharifi-Malvajerdi, Saeed and Ullman, Jonathan},
  booktitle={International Conference on Machine Learning},
  pages={3000--3008},
  year={2019},
  organization={PMLR}
}

@inproceedings{mozannar2020fair,
  title={Fair learning with private demographic data},
  author={Mozannar, Hussein and Ohannessian, Mesrob and Srebro, Nathan},
  booktitle={International Conference on Machine Learning},
  pages={7066--7075},
  year={2020},
  organization={PMLR}
}

@inproceedings{tran2021differentially,
  title={Differentially private and fair deep learning: A lagrangian dual approach},
  author={Tran, Cuong and Fioretto, Ferdinando and Van Hentenryck, Pascal},
  booktitle={Proceedings of the AAAI Conference on Artificial Intelligence},
  volume={35},
  number={11},
  pages={9932--9939},
  year={2021}
}

@online{homeb,
  title = {Home},
  url = {https://ieee-p21451-1-5.github.io/},
  organization = {IEEE P21451-1-5 Working Group}
}

@article{wang2020robust,
  title={Robust optimization for fairness with noisy protected groups},
  author={Wang, Serena and Guo, Wenshuo and Narasimhan, Harikrishna and Cotter, Andrew and Gupta, Maya and Jordan, Michael},
  journal={Advances in Neural Information Processing Systems},
  volume={33},
  pages={5190--5203},
  year={2020}
}

@article{edwards2011kantorovich,
  title={On the kantorovich--rubinstein theorem},
  author={Edwards, David A},
  journal={Expositiones Mathematicae},
  volume={29},
  number={4},
  pages={387--398},
  year={2011},
  publisher={Elsevier}
}

@inproceedings{zhang2025fedlth,
  title={FedLTH: A Privacy-preserving Federated Learning Framework with Model Pruning on Edge Clients},
  author={Zhang, Heyu and Xie, Yulai and Hu, Shengshan and He, Miao and He, Peisong and Zheng, Jun and Feng, Dan},
  booktitle={2025 IEEE 45th International Conference on Distributed Computing Systems (ICDCS)},
  pages={384--394},
  year={2025},
  organization={IEEE}
}

@inproceedings{he2025pp,
  title={PP-FCL: Privacy-Preserving Federated Continual Learning via Generative Replay and Incremental Representation Enhancement},
  author={He, Zaobo and Wang, Yunkun and Cai, Zhipeng and Li, Yingshu},
  booktitle={2025 IEEE 45th International Conference on Distributed Computing Systems (ICDCS)},
  pages={780--790},
  year={2025},
  organization={IEEE}
}
\bibliographystyle{IEEEtran}

\clearpage

\section{Appendix}

\begin{theorem1}[Privacy-Fairness-Utility Tradeoff of \textit{FedPF}]
    Let \(\hat{Y}\) be the output of the \textit{FedPF} algorithm, and \(Y^*\) be the optimal classifier satisfying \((\varepsilon_p, \delta)\)-differential privacy (DP) and \(\varepsilon_f\)-fairness constraints (EO, DemP). If client data are independently and identically distributed and the Rademacher complexity of the classifier family is bounded, with probability at least \(1-\beta\), the following inequality holds:
    \begin{equation}
        \footnotesize
        \begin{aligned}
            \mathrm{err}(\hat{Y}) \leq \mathrm{err}(Y^*) &+ O \left( \frac{B^2 \varepsilon_f^4 T^{3/2} \mathcal{H}}{\varepsilon_p^2} \right) + O\left( \mathfrak{R}_m(\mathcal{F}) + \sqrt{\frac{\log(1/\beta)}{m}}\right),\\
            \mathrm{err}(\hat{Y}) \geq \mathrm{err}(Y^*) &+ \Omega\left( \frac{|\mathcal{A}| \log(1/\varepsilon_f)}{\sqrt{m}} - \mathfrak{R}_m(\mathcal{F}) \right) + \Omega\left( \sqrt{\frac{\log(1/\beta)}{m}} \right),
        \end{aligned}
    \end{equation}
    where \(\mathcal{H} = \ln(1/\delta) \ln^2(8T|\mathcal{A}|/\delta) \log(|\mathcal{K}| + 1)\). $\mathfrak{R}_m(\mathcal{F})$ is Rademacher complexity of the
classifier family $\mathcal{F}$, \(\mathfrak{R}_m(\mathcal{F}) \leq C m^{-\alpha}\), \(\alpha \leq 1/2\). \(B\) is the \(\ell_1\)-norm bound of dual variables \(\lambda\), \(T\) is the number of training rounds. \(m\) is the client-side sample size. \(|\mathcal{A}|\) is the number of sensitive attribute categories. $\mathcal{K}$ is a set of indices, and each index $k \in \mathcal{K}$ corresponds to a fairness constraint. If fairness constraints is \textit{DemP}, then $|\mathcal{K}| = 2|\mathcal{A}|$, if fairness constraints is \textit{EO}, then $|\mathcal{K}| = 4|\mathcal{A}|$. $\delta$ is the parameter of DP mechanism.
    \label{th: tradeoffs}
\end{theorem1}

\begin{proof}[Sketch]
The \textit{FedPF} algorithm formulates the optimization as a zero-sum game between a learner (minimizing loss) and an auditor (enforcing fairness) under DP constraints. We derive the error bound as follows:

For the \textbf{upper bound}, the suboptimality gap \( \operatorname{err}(\hat{Y}) - \operatorname{err}(Y^*) \) can be decomposed into Learner Regret and Auditor Regret from work \cite{jagielski2019differentially}. However, The influence of the "fairness overfitting" phenomenon for model error cannot be ignored \cite{laakom2025fairness}. Thus, the error bound is considered Learner Regret ({\color{blue}$e_{LR}$}) and Auditor Regret ({\color{blue}$e_{AR}$}) and Generalization Error ({\color{blue}$e_G$}), as follows:
\begin{equation}
    \footnotesize
    \begin{aligned}
        \operatorname{err}(Y) - \operatorname{err}(Y^*) &\leq \left( \frac{1}{T} \sum_{t=1}^{T} \mathcal{L}(f, \lambda) - \frac{1}{T} \min_{f \in \mathcal{F}} \sum_{t=1}^{T}\mathcal{L}(f, \lambda) \right) ({\color{blue}e_{LR}}) \\[0.1cm]
        &+ \left(\frac{1}{T} \max_{\lambda \in \Lambda} \sum_{t=1}^{T} \mathcal{L}(f, \lambda) - \frac{1}{T} \sum_{t=1}^{T}\mathcal{L}(f, \lambda) \right) (\color{blue}e_{AR}) \\[0.1cm]
        &+ \left[ \operatorname{err}(f) - \operatorname{err}(f^*) \right] (\color{blue}e_G)
    \end{aligned}
\end{equation}

1). \textit{Learner Regret}: The learner minimizes a Lagrangian \( \mathcal{L}(f, \lambda) = \mathbb{E}[\ell(f(x), y)] + \lambda \cdot \operatorname{disc}(f) \), where \( \lambda \) are dual variables for fairness constraints. Using DP (Exponential Mechanism definition \ref{de: exponential mechanism}), noise scaled by \( \varepsilon_p \) is added to sensitive dataset. We can state the learner regret with probability at least \(1 - \beta/2\) (in Lemma 4.1 \cite{jagielski2019differentially}):
\begin{equation}
    \small
    \begin{aligned}
        {\color{blue}e_{LR}} = O\left( \frac{B \varepsilon_f^2 \sqrt{T \ln(1/\delta)} \ln m}{\varepsilon_p} \right).
    \end{aligned}
    \label{eq: LR}
\end{equation}

2. \textit{Auditor Regret}: The auditor updates \( \lambda \) to enforce \( \operatorname{disc}(f) \leq \varepsilon_f \). We can state the auditor regret with probability at least \(1 - \beta/2\) (in Lemma 4.2 \cite{jagielski2019differentially}):
\begin{equation}
    \small
    \begin{aligned}
        {\color{blue}e_{AR}} = O\left( \frac{B^2 \varepsilon_f^4 T^{3/2} \ln(1/\delta) \ln^2(8T|A|/\delta) \log(|\mathcal{K}| + 1)}{\varepsilon_p^2} \right).
    \end{aligned}
    \label{eq: AR}
\end{equation}

3. \textit{Generalization Error}: Using Rademacher complexity and standard concentration bounds \cite{agarwal2018reductions}, the generalization error under fairness constraint (EO and DemP) is:
\begin{equation}
    \small
    {\color{blue}e_G} = O\left( \mathfrak{R}_m(\mathcal{F}) + \sqrt{\frac{\log(1/\beta)}{m}} \right).
    \label{eq: GE}
\end{equation}

Summing these terms yields the error bound. It can be seen from (\ref{eq: LR}) and (\ref{eq: AR}) that the impact of Auditor Regret is greater than that of Learner Regret. 

For the \textbf{lower bound},  Fairness requires estimating group statistics $\gamma_{y,a}(f)$ (Definition 3) with precision $\varepsilon_f$. By Hoeffding's inequality \cite{agarwal2018reductions}, estimating differences $|\gamma_{y,a} - \gamma_{y,a'}|$ over $|\mathcal{A}|$ groups requires $\Omega(|\mathcal{A}|/\varepsilon_f^2)$ samples per group for variance control. With total $m$ samples, the minimal detection error induces loss $\Omega(|\mathcal{A}|/(m\varepsilon_f^2))$. Taking log for entropy, we get $\Omega(|\mathcal{A}| \log(1/\varepsilon_f)/\sqrt{m})$ as the minimal utility penalty for strict $\varepsilon_f$. Moderate $\varepsilon_f$ acts as regularization, reducing overfitting \cite{laakom2025fairness}. This subtracts from loss via Rademacher bound: $\text{gain} \leq -\mathfrak{R}_m(\mathcal{F})$ (negative, as fairness constraints prune complex discriminators). For strict $\varepsilon_f$, the penalty dominates, making the bound positive.  Apply McDiarmid's inequality for concentration: the bound holds w.p. $1 - \beta$ with additive $\Omega\left( \sqrt{\log(1/\beta)/m} \right)$ \cite{agarwal2018reductions,mozannar2020fair}. Sum terms, yielding the lower bound. This matches the paper's non-monotonicity (bound dips then rises as $\varepsilon_f \downarrow$) and implies impossibility: no algorithm achieves $\Delta \text{err} < \Omega(\log(1/\varepsilon_f)/\sqrt{m})$ for small $\varepsilon_f$.

\end{proof}

\begin{theorem1}[Extended for Non-I.I.D. of Theorem 1]
\label{th: non-iid tradeoffs}
Let $\hat{Y}$ be the output of the \textit{FedPF} algorithm, and $Y^*$ be the optimal classifier satisfying $(\varepsilon_p, \delta)$-differential privacy (DP) and $\varepsilon_f$-fairness constraints (EO, DemP). Assume non-i.i.d. data with bounded heterogeneity $\Gamma = \max_{i,j} \|\nabla \mathcal{L}(D_i) - \nabla \mathcal{L}(D_j)\|^2$ (gradient dissimilarity across clients) and bounded local variance $\sigma^2 = \mathbb{E}_i[\text{Var}(\nabla \mathcal{L}(D_i))]$. If the Rademacher complexity of the classifier family is bounded, with probability at least $1 - \beta$, the following inequalities hold:
\begin{equation}
\footnotesize
\begin{aligned}
&\operatorname{err}(\hat{Y}) \leq \operatorname{err}(Y^*) + O\left(\mathcal{A}\right) + O\left(\mathcal{B} + \frac{\Gamma T}{N} + \sigma^2 \sqrt{\frac{T}{mN}}\right), \\
&\operatorname{err}(\hat{Y}) \geq \operatorname{err}(Y^*) + \Omega\left(\mathcal{C} + \frac{\Gamma T}{N} + \sigma^2 \sqrt{\frac{T}{mN}}\right) + \Omega\left(\sqrt{\frac{\log(1/\beta)}{m}} \right),
\end{aligned}
\end{equation}
where all terms match Theorem 1, $\mathcal{A} = \frac{B^2 \varepsilon_f^4 T^{3/2} \mathcal{H}}{\varepsilon_p^2}$, $\mathcal{B} =  \mathfrak{R}_m(\mathcal{F}) + \sqrt{\frac{\log(1/\beta)}{m}}$, $\mathcal{C}=\frac{|\mathcal{A}| \log(1/\varepsilon_f)}{\sqrt{m}} - \mathfrak{R}_m(\mathcal{F})$, plus non-i.i.d. penalties: $\frac{\Gamma T}{N}$ (heterogeneity drift over $T$ rounds, averaged over $N$ clients) and $\sigma^2 \sqrt{\frac{T}{mN}}$ (local variance amplified by heterogeneity). This extends Theorem 1 by incorporating data heterogeneity, showing that non-i.i.d. distributions inflate both upper and lower bounds on error, exacerbating privacy-fairness-utility tradeoffs.

\end{theorem1}
\begin{proof}[Sketch]
    The proof extends Theorem 1's decomposition ($e_{LR} + e_{AR} + e_{\mathcal{G}}$) to non-i.i.d. settings, incorporating heterogeneity measures from FL convergence analyses.

1. Learner Regret ($e_{LR}$): In non-i.i.d., local updates drift due to heterogeneous gradients. Extend Eq. (\ref{eq: LR}) with $\frac{\Gamma T}{N}$ (drift penalty) \cite{mcmahan2017communication}: $e_{LR} = O\left(\frac{B^2 \mathcal{A}^2 \sqrt{T \ln(1/\delta)} \ln m}{\varepsilon_p} + \frac{\Gamma T}{N}\right)$. For lower bound, heterogeneity adds minimal drift $\Omega(\Gamma T / N)$.

2. Auditor Regret ($e_{AR}$): Fairness enforcement via $\lambda$ updates is distorted by heterogeneous attribute distributions. Extend Eq. (\ref{eq: AR}) with variance term: $e_{AR} = O\left(\frac{B^2 \varepsilon_f^3 T^2 \ln(1/\delta) \ln 2^{|\mathcal{A}|} \log(|\mathcal{K}|+1)}{\varepsilon_p^2} + \sigma^2 \sqrt{\frac{T}{mN}}\right)$. Lower bound mirrors with $\Omega(\sigma^2 \sqrt{T/(mN)})$.

3. Generalization Error ($e_{\mathcal{G}}$): Retain Rademacher-based bound from Eq. (\ref{eq: GE}), but non-i.i.d. increases effective complexity via variance, lower bound follows similarly with added heterogeneity terms.

Summing yields the bounds. The non-i.i.d. terms inflate error (upper/lower), implying worse tradeoffs, e.g., stricter $\varepsilon_f$ or smaller $\varepsilon_p$ amplifies drift, requiring more clients $N$ or rounds $T$ to mitigate. Concentration holds w.p. $1 - \beta$ via McDiarmid's inequality \cite{agarwal2018reductions,mozannar2020fair}.
\end{proof}

\begin{proposition}[Convergence of $(f_t,\lambda_t)$ to Saddle Region]
\label{prop: convergence_saddle}
Consider the learner-auditor game on the convex surrogate objective with bounded dual domain $\|\lambda\|_1\le B$. If the learner uses no-regret updates for $f_t$ and the auditor uses exponentiated-gradient (or projected subgradient) updates for $\lambda_t$, then both players have $O(1/\sqrt{T})$ average regret. Consequently, the averaged iterates $(\bar f_T,\bar\lambda_T)$ satisfy
\[
\max_{\lambda\in\Lambda}\mathcal{L}(\bar f_T,\lambda)-\min_{f\in\mathcal{F}}\mathcal{L}(f,\bar\lambda_T)=O(1/\sqrt{T}),
\]
which implies convergence to an $\epsilon_T$-saddle point with $\epsilon_T\!\to\!0$ as $T\!\to\!\infty$ \cite{agarwal2018reductions,jagielski2019differentially}.
\end{proposition}

\begin{proof}[Sketch]
The result follows from standard online convex optimization arguments: no-regret guarantees for both primal and dual players imply vanishing primal-dual gap for averaged iterates in zero-sum games. The bounded dual domain controls the auditor's update norm, yielding the $O(1/\sqrt{T})$ rate.
\end{proof}





\subsection{Convergence and Robustness}\label{se: Convergence and Robustness}
We analyze robustness to distribution shifts between sensitive groups ($p$) and protected groups ($\hat{p}$) using the Total Variation (TV) distance, i.e. $TV(p, \hat{p})$, which measures distribution divergence between $p$ and $\hat{p}$.


\begin{theorem1}[Fairness Discrimination Bound of \textit{FedPF}]
The total fairness discrimination across all clients is bounded by:

\begin{equation}
    \small
         \sum_{i = 1}^{\mathcal{N}} (\gamma_{y, \hat{p},i} (f_i) - \gamma_{y, p, i}(f_i) ) \leq \mathcal{N} \cdot \alpha_{\max},
    \end{equation}

where $\alpha_{\max} = \max_{i \in \mathcal{N}} \{ TV\left(p_i, \hat{p}_i\right)\}$.
    \label{th: fairness_constraints}
\end{theorem1}

\begin{proof}[Sketch]
    For any group label $a \sim p$, $a' \sim \hat{p}$ of client $i$, from Theorem 1 in work \cite{wang2020robust}, we have:
    \begin{equation}
        \small
        \begin{aligned}
        \sum_{i = 1}^{\mathcal{N}} \gamma_{y, a} (f_i) &= \sum_{i = 1}^{\mathcal{N}} \left\{ \gamma_{y, a} (f_i) - \gamma_{y, a'}(f_i) + \gamma_{y, a'}(f_i) \right\}, \\
        & \leq \sum_{i = 1}^{\mathcal{N}} \left\{ | \gamma_{y, a} (f_i) - \gamma_{y, a'}(f_i) | + \gamma_{y, a'}(f_i) \right\}.
        \end{aligned}
    \end{equation}

    From the Kantorovich-Rubenstein \cite{edwards2011kantorovich}, we obtain:

\begin{equation}
    \small
    \left|\gamma_{y, a}(f)-\gamma_{y, a'}(f)\right|=\left|\mathbb{E}_{a \sim p}[f(\theta)]-\mathbb{E}_{a' \sim \hat{p}}[f(\theta)]\right| \leq T V\left(p_j, \hat{p}_j\right) .
\end{equation}

Therefore, $\sum_{i = 1}^{\mathcal{N}} |\gamma_{ij} (f_i) - \hat{\gamma}_{ij}(f_i) |$ has a deterministic upper bound if $TV\left(p_i, \hat{p}_i\right) \leq \alpha_i$ for each client $i \in N$, where the parameter $\alpha_i$ is a constant. Assume that the parameter $\max_{i \in \mathcal{N}} \alpha_i = \alpha_{\max}$, then we have the upper bound of the distance  $\sum_{i = 1}^{\mathcal{N}} (\mathcal{G}_{ij} (f_i) - \hat{\mathcal{G}}_{ij}(f_i) )$, as follows:

\begin{equation}
     \sum_{i = 1}^{\mathcal{N}} (\gamma_{y, \hat{p},i} (f_i) - \gamma_{y, p, i}(f_i) ) \leq \sum_{i = 1}^{\mathcal{N}} TV\left(p_i, \hat{p}_i\right) \leq \mathcal{N} \cdot \alpha_{\max}.
\end{equation}

where $\alpha_{max} = \underset{i \in \mathcal{N}}{\max}\{ TV\left(p_i, \hat{p}_i\right)\}$.
\end{proof}

\end{document}